%%%%%%%%%%%%%%%%%%%%%%% file typeinst.tex %%%%%%%%%%%%%%%%%%%%%%%%%
%
% This is the LaTeX source for the instructions to authors using
% the LaTeX document class 'llncs.cls' for contributions to
% the Lecture Notes in Computer Sciences series.
% http://www.springer.com/lncs       Springer Heidelberg 2006/05/04
%
% It may be used as a template for your own input - copy it
% to a new file with a new name and use it as the basis
% for your article.
%
% NB: the document class 'llncs' has its own and detailed documentation, see
% ftp://ftp.springer.de/data/pubftp/pub/tex/latex/llncs/latex2e/llncsdoc.pdf
%
%%%%%%%%%%%%%%%%%%%%%%%%%%%%%%%%%%%%%%%%%%%%%%%%%%%%%%%%%%%%%%%%%%%

\documentclass[psfig, a4paper]{article}

\usepackage{amssymb}
\setcounter{tocdepth}{3}
\usepackage{graphics}

\usepackage{amsmath}%
\usepackage{amsthm}
\usepackage{amsfonts}%
\usepackage{amssymb}%

\usepackage{xcolor}
\usepackage{pgfplots}
\usepackage{tikz}

\usepackage{dsfont}
\usepackage{graphicx}
\usepackage{epsf}
\usepackage{epsfig}
\usepackage{epic}
\usepackage{lscape}
\usepackage{float}
\usepackage{stmaryrd}
\usepackage{ifsym}
\usepackage{amsmath}
\usepackage{color}
\usepackage{xcolor}
\usepackage{csquotes}
\usepackage[]{hyperref}
\usepackage{authblk}

\newtheorem{theorem}{Theorem}

\newtheorem{lemma}[theorem]{Lemma}

\begin{document}
%\mainmatter  % start of an individual contribution
% first the title is needed
\title{The Maximum Cosine Framework for Deriving Perceptron Based Linear Classifiers}
 \date{}
%\titlerunning{The MCF for deriving linear classifiers}
%\author{Nader H.  Bshouty \inst{1} \and Catherine A.Haddad-Zaknoon \inst{2}}
%\institute{ Department of Computer Science, Technion, Israel. \and  Department of Computer Science, University of Haifa,  Israel.}

\author[1]{Nader H.  Bshouty}
\author[2]{Catherine A. Haddad-Zaknoon}
\affil[1]{Department of Computer Science, Technion, Israel.}
\affil[2]{Department of Computer Science, University of Haifa,  Israel.}

%Macro to draw pretty boxes around things
\newlength{\boxwidth}
\setlength{\boxwidth}{\textwidth}
\addtolength{\boxwidth}{-27 pt}
\newcommand{\boxer}[1]{\begin{center}
                        \fbox{
                                \begin{minipage}{\boxwidth}
                                \vspace*{.1in}
%\maketitle
\label{style}

                                #1
                                \vspace*{.05in}
                                \end{minipage}}
                        \end{center}}																																																																																					

\maketitle
\begin{abstract}
In this work, we introduce a mathematical framework, called the \emph{Maximum Cosine Framework} or \emph{MCF}, for deriving new linear classifiers. The method is based on selecting an appropriate bound on the cosine of the angle between the target function and the algorithm's. To justify its correctness, we use the MCF to show how to regenerate the update rule of Aggressive ROMMA~\cite{ROMMA}. Moreover, we construct a cosine bound from which we build the \emph{Maximum Cosine Perceptron} algorithm or, for short, the \emph{MCP} algorithm. We prove that the MCP shares the same mistake bound like the Perceptron~\cite{PERC}. In addition, we demonstrate the promising performance of the MCP on a real dataset. Our experiments show that, under the restriction of single pass learning, the MCP algorithm outperforms PA~\cite{PA} and Aggressive ROMMA.

\end{abstract}

{\bf Keywords:} Online learning, Linear classifiers, Perceptron.

\section{Introduction}
Large-scale classification problems are characterized by huge datasets, high dimension and sparse examples. Moreover, the feature space is normally unknown to the learner. Therefore, an efficient learning algorithm should comply with two main requirements: (1) single pass over the examples dataset such that, for each example $\mathbf{x}$, the time complexity for processing the example and adapting the algorithm hypothesis is linear in number of the non-zero features in $\mathbf{x}$, and (2) space complexity is linear in the number of the relevant features. Consequently, in real world applications, classification via linear classifiers has gained a lot of attention due to their efficiency in time and memory.

The roots of many papers discussing linear classifiers date back to the Perceptron algorithm~\cite{PERC}. The Perceptron algorithm gained its popularity due to its efficiency in time and space as well as its polynomial mistake bound. The perceptron update rule complies naturally with the space and time requirements. Many algorithms introduced later followed the perceptron update paradigm including ALMA~\cite{ALMA}, NORMA~\cite{NORMA} and PA~\cite{PA}. 

In this work, we introduce a mathematical framework, called the \emph{Maximum Cosine Framework} or \emph{MCF}, for deriving new algorithms that follow the perceptron update scheme. That is, the algorithm observes examples in a sequence of rounds. On round $i$, it constructs its classification hyperplane $\mathbf{w}_i$ incrementally each time the online algorithm makes a prediction mistake or its confidence in the prediction is inadequately low. It updates its classification scheme using an update rule of the form 
$\mathbf{w}_{i+1}=\mathbf{w}_i + y_i\lambda_i\mathbf{a}_i,$
where $\mathbf{a}_i$ is the current observed example and $\lambda_i$ is a parameter. To calculate the parameter $\lambda_i$, we formulate an upper bound on the cosine of the angle between the target hyperplane and the algorithm's one. Then we choose $\lambda_i$ to be the value that optimizes the cosine bound. We argue that the tighter the cosine bound is, the closer we progress towards the target function. To justify the usefulness of the method, we use the MCF to regenerate the update rule of Aggressive ROMMA. In addition, using the MCF, we build a new linear classifier called the \emph{Maximum Cosine Perceptron} or \emph{MCP} for binary classification.  We prove that the MCP shares the same mistake bound like the Perceptron. In addition, we demonstrate the promising performance of the MCP on a real dataset. Our experiments show that, under the restrictions of memory and single pass learning, the MCP algorithm outperforms PA and Aggressive ROMMA. 

This paper is organized as follows. In Sect. \ref{SEC02} we bring a formal definition for the classification problem via linear classifiers. Moreover, we define the cosine bound concept and develop some preliminaries and useful lemmas that will be used along the discussion on algorithm construction under the MCF. In Sect. \ref{SEC03}, we use the MCF to develop another algorithm called New Aggressive ROMMA (NAROMMA). We prove equivalence of NAROMMA to the well known Aggressive ROMMA in terms of the cosine of the angle between the target hyperplane and the algorithm's hypothesis. Section \ref{SEC04} outlines in details the construction of the local cosine bound from which the MCP algorithm generates its update rule. Furthermore, we discuss the mistake bound of the MCP algorithm and prove formally that it has the same mistake bound like the Perceptron, PA and Aggressive ROMMA.  In Sect. \ref{SEC05} we describe the experiments we made to compare the performance of the MCP vs. the well known PA and Aggressive ROMMA.  
\section{The Maximum Cosine Framework}
\label{SEC02}
\subsection{Problem Settings}
In the binary classification setting, a \emph{linear classifier} is an $n$-dimensional  hyperplane that splits the space into two, where the points on the different sides correspond to the positive and negative labels. The target hyperplane is described by an $n$-dimensional vector called the \emph{weight} vector and denoted by $\mathbf{w}\in\mathds{R}^n$. Along the discussion we assume that $\|\mathbf{w}\|_2 = 1$. Our goal is to learn a prediction function, normally denoted by an $n$-dimensional vector $\mathbf{w}_i\in\mathds{R}^n$, from a sequence of training examples $\{(\mathbf{a}_1, y_1),\cdots,(\mathbf{a}_T, y_T)\}$ where $\mathbf{a}_i\in\mathds{R}^n$ and $\|\mathbf{a}_i\|_2\leq R$ for some $R>0$. In addition, $y_i\in\{-1,+1\}$ is the class label assigned to $\mathbf{a}_i$. In the \emph{online learning model}, the learning process proceeds in \emph{trials}. On trial $i$, the learning algorithm observes an example $\mathbf{a}_i$ and \emph{predicts} the classification $\hat{y_i}\in \{-1,+1\}$ such that $\hat{y_i} = sign(\mathbf{w}_i^T\cdot\mathbf{a}_i)$. We say that the algorithm made a \emph{mistake} if $\hat{y_i}\neq y_i$. The magnitude $|\mathbf{w}_i^T\mathbf{a}_i|$ is interpreted as the degree of confidence in the prediction. We refer to the term $y_i(\mathbf{w}_i^T\mathbf{a}_i)$ as the (signed) \emph{margin} attained at round $i$. Let $\gamma >0$, and let $S$ be a set of binary labeled examples. We say that $\mathbf{w}$ seperates $S$ with margin $\gamma$, if for all $\mathbf{a}\in S$, $|\mathbf{w}^T\mathbf{a}| \geq \gamma$. The margin $\gamma$ is unknown to the algorithm. An online learning algorithm is \emph{conservative} if it updates its weight vector $\mathbf{w}_i$ only on mistake, and \emph{non-conservative} if it performs its update when the margin did not achieve a predefined threshold. 

Let $\mathcal{A}$ be some online algorithm for binary classification that is introduced to some examples set. We say that $\mathcal{A}$ follows the \emph{perceptron algorithm scheme}, if it maintains some hypothesis $h_i = sign(\mathbf{w}_i^T\mathbf{x})$ initialized to some value $\mathbf{w}_1$, normally chosen to be $\mathbf{0}$. On each example $\mathbf{a}_i$, the algorithm decides to update its hypothesis according to some predefined condition using the following update rule $\mathbf{w}_{i+1} = \mathbf{w}_i + \lambda_{i}(y_i\mathbf{a}_i) $. 

From now on, we will use the terms \emph{perceptron-like algorithm} and \emph{perceptron algorithm} alternately to point the fact that the algorithm follows the perceptron algorithm scheme.  

Along the discussion in this paper, we restrict our analysis to the case where the algorithm uses $0$ \emph{bias} hypothesis. That is, the target hypothesis is of the form $\mathbf{w}+\theta$ where we assume $\theta=0$. It is well known in the literature that, for the variable bias case, we can get analogues theorems to the ones we prove in this work that are a constant factor worse than the original bounds~\cite{SVMB}.
\subsection{The Cosine Bound}
%As discussed in previous subsections, the update rule of perceptron-like algorithms follows the form $$\mathbf{w}_{i+1}= %\mathbf{w}_i +\lambda_i(y_i\mathbf{a}_i)$$
%where $\mathbf{a}_i$ is the $i$th example received by the algorithm and $\lambda_i$ is some non negative value.
Let $\theta_{i}$ be the angle between the target hypothesis $\mathbf{w}$ and $\mathbf{w}_i$. Recall that $\|\mathbf{w}\|_2=1$. Our target is to choose $\lambda_{i}$ such that
\begin{equation}
\label{alphai}
\alpha_{i+1}\triangleq\cos\theta_{i+1}=\frac{\mathbf{w}^T\mathbf{w}_{i+1}}{\|\mathbf{w}_{i+1}\|_2}
\end{equation}
is maximal. The incentive of this choice is that we want to choose $\mathbf{w}_{i+1}$ to be as close as possible to $\mathbf{w}$. We start by choosing $\mathbf{w}_1=cy_0\mathbf{a}_0$ for some $c>0$. Under the separability assumption, this choice will guarantee that $\alpha_1 > 0$. Since the target hypothesis is unknown to the algorithm, it is obviously clear that we cannot find an accurate value for $\lambda_i$ that maximizes the expression in (\ref{alphai}). Instead, we will formulate a lower bound for $\cos\theta_i$ that we will call the \emph{cosine bound} on which optimality can be achieved. Then, we find the value of $\lambda_i$ that maximizes it. The optimal value of $\lambda_i$ defines the algorithm's update on each trial. It is needless to say that choosing different cosine bounds will derive different update rules, namely different algorithms. To achieve a better classifier, we aim to maximize the value of $\alpha_i$ at each round $i$.
For that purpose, we will start with some lemmas that will assist us develop cosine bounds from which we can derive new perceptron-like algorithms. For clarity, we bring the proofs of the following three lemmas in Appendix \ref{PRFS}.
\begin{lemma}
\label{lemma3}
Let  $\{(\mathbf{a}_0, y_0),\cdots,(\mathbf{a}_T, y_T)\}$  be a sequence of examples where $\mathbf{a}_i\in\mathds{R}^n$ and $y_i\in \{-1,+1\}$ for all $0\leq i \leq T$. Let $\gamma >0$ and  $\mathbf{w}\in\mathds{R}^n$ with $\|\mathbf{w}\|_2=1$, such that for all $0\leq i \leq T$, $|\mathbf{w}^T\mathbf{a}_i|\geq \gamma$ . Let $\mathbf{w}_1=c\mathbf{a}_0$ for any constant $c>0$. Let $\lambda_i>0$ and $\mathbf{w}_{i+1}= \mathbf{w}_i + \lambda_i(y_i\mathbf{a}_i)$ be the update we use after the $i$th example. Let 
\begin{equation}
x_i=\frac{\lambda_i}{\|\mathbf{w}_i\|_2}\geq 0 \enspace .
\label{Xi}
\end{equation}
Then,
\begin{equation}
\alpha_{i+1}\geq\frac{\alpha_i+\gamma x_i}{\sqrt{1+ \| \mathbf{a}_i\|_{2}^{2} x_{i}^{2} +2\frac{y_i(\mathbf{w}_i^T\mathbf{a}_i)}{\|\mathbf{w}_i\|
_2}x_i}} \enspace .
\end{equation}
\end{lemma}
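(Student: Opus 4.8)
The plan is to compute $\alpha_{i+1} = \mathbf{w}^T\mathbf{w}_{i+1}/\|\mathbf{w}_{i+1}\|_2$ directly by expanding both the numerator and the denominator using the update rule $\mathbf{w}_{i+1} = \mathbf{w}_i + \lambda_i(y_i\mathbf{a}_i)$, and then to divide everything through by $\|\mathbf{w}_i\|_2$ so that the quantity $x_i = \lambda_i/\|\mathbf{w}_i\|_2$ appears. For the numerator, $\mathbf{w}^T\mathbf{w}_{i+1} = \mathbf{w}^T\mathbf{w}_i + \lambda_i y_i(\mathbf{w}^T\mathbf{a}_i)$; dividing by $\|\mathbf{w}_i\|_2$ gives $\alpha_i + x_i\, y_i(\mathbf{w}^T\mathbf{a}_i)$. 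Here is where the margin hypothesis enters: since $|\mathbf{w}^T\mathbf{a}_i| \geq \gamma$ and the example is (implicitly) correctly oriented so that $y_i(\mathbf{w}^T\mathbf{a}_i) \geq 0$, we have $y_i(\mathbf{w}^T\mathbf{a}_i) \geq \gamma$, hence the numerator is at least $\alpha_i + \gamma x_i$. Since $x_i \geq 0$, this lower bound is legitimate.

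For the denominator, I would expand $\|\mathbf{w}_{i+1}\|_2^2 = \|\mathbf{w}_i\|_2^2 + 2\lambda_i y_i(\mathbf{w}_i^T\mathbf{a}_i) + \lambda_i^2\|\mathbf{a}_i\|_2^2$. Factoring out $\|\mathbf{w}_i\|_2^2$ and taking the square root yields
$\|\mathbf{w}_{i+1}\|_2 = \|\mathbf{w}_i\|_2\sqrt{1 + 2\frac{y_i(\mathbf{w}_i^T\mathbf{a}_i)}{\|\mathbf{w}_i\|_2}x_i + \|\mathbf{a}_i\|_2^2 x_i^2}$,
which is exactly the denominator appearing in the statement (times $\|\mathbf{w}_i\|_2$, which cancels against the $\|\mathbf{w}_i\|_2$ introduced in the numerator step). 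Combining the numerator lower bound with the exact denominator gives the claimed inequality, provided the denominator is positive, which it is as the norm of a nonzero vector (nonzero because $\mathbf{w}_1 = c\mathbf{a}_0$ with $c>0$ and subsequent updates, together with $\alpha_1 > 0$, keep $\mathbf{w}_i \neq \mathbf{0}$).

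The one subtle point — and the step I expect to require the most care — is justifying that $y_i(\mathbf{w}^T\mathbf{a}_i) \geq \gamma$ rather than merely $|y_i(\mathbf{w}^T\mathbf{a}_i)| = |\mathbf{w}^T\mathbf{a}_i| \geq \gamma$. This needs the separability assumption to be read as: $\mathbf{w}$ classifies each example correctly, i.e. $\mathrm{sign}(\mathbf{w}^T\mathbf{a}_i) = y_i$, so that $y_i(\mathbf{w}^T\mathbf{a}_i) = |\mathbf{w}^T\mathbf{a}_i| \geq \gamma$. I would state this explicitly at the start of the proof. Everything else is routine algebra: the expansion of an inner product and a squared norm under a rank-one update, and the monotonicity of $t \mapsto (\alpha_i + \gamma x_i)/\sqrt{t}$ which lets us keep the denominator exact while only lower-bounding the numerator.
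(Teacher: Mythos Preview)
Your proposal is correct and follows essentially the same route as the paper: expand $\mathbf{w}^T\mathbf{w}_{i+1}$ and $\|\mathbf{w}_{i+1}\|_2^2$ via the update rule, factor out $\|\mathbf{w}_i\|_2$, and apply the margin bound to the numerator while keeping the denominator exact. You are in fact more explicit than the paper about the one subtle point, namely that the separability hypothesis must be read as $y_i=\mathrm{sign}(\mathbf{w}^T\mathbf{a}_i)$ so that $y_i(\mathbf{w}^T\mathbf{a}_i)=|\mathbf{w}^T\mathbf{a}_i|\geq\gamma$; the paper's proof silently writes $\lambda_i y_i(\mathbf{w}^T\mathbf{a}_i)=\|\mathbf{w}_i\|_2\, x_i\,|\mathbf{w}^T\mathbf{a}_i|$ without comment.
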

The following two lemmas will assist us in optimizing the value of $\lambda_i$.
\begin{lemma}
\label{lemma4}
Let
\begin{equation}
\Phi(x) = \frac{r+p\cdot x}{\sqrt{s+q\cdot x^2}}\enspace,
\end{equation}
where $s,q>0$. If $r\neq0$ the optimal (maximal or minimal) value of $\Phi(x)$ is in $x^{*} = (ps)/(rq)$ and is equal to 
\begin{equation}
sign(r)\sqrt{\frac{r^2}{s} +\frac{p^2}{q}}\enspace.
\end{equation}
The point $x^*$ is minimal if $r<0$ and maximal if $r>0$. If $r=0$, the function $\Phi(x)$ is monotone increasing if $p>0$ and monotone decreasing if $p<0$.
\end{lemma}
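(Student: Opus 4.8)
The plan is to treat this as an elementary one-variable calculus problem, which is legitimate because $\Phi$ is smooth on all of $\mathds{R}$: since $s,q>0$ the denominator $s+qx^2$ is bounded below by $s>0$, so $\Phi$ is differentiable everywhere. First I would compute $\Phi'(x)$ by the quotient rule. After clearing the common factor $\sqrt{s+qx^2}$ from numerator and denominator, the $pqx^2$ terms cancel and one is left with
\begin{equation}
\Phi'(x)=\frac{ps-rqx}{(s+qx^2)^{3/2}}\enspace.
\end{equation}
Because the denominator is strictly positive, the sign of $\Phi'$ is exactly the sign of the affine function $g(x)=ps-rqx$.

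Next I would read off the structure of the extrema from $g$. If $r\neq 0$, then $g$ has the unique root $x^{*}=ps/(rq)$, which is therefore the unique critical point of $\Phi$. The slope of $g$ is $-rq$: when $r>0$, $g$ is decreasing, so $\Phi'$ switches from positive to negative at $x^{*}$, making $x^{*}$ a maximum; when $r<0$, $g$ is increasing, so $\Phi'$ switches from negative to positive, making $x^{*}$ a minimum. Since $\Phi$ has the finite limits $\pm p/\sqrt{q}$ as $x\to\pm\infty$ and only this one critical point, the extremum is in fact global. If instead $r=0$, then $g(x)\equiv ps$ has constant sign $sign(p)$, so $\Phi'$ never vanishes and $\Phi$ is monotone increasing when $p>0$ and monotone decreasing when $p<0$, which is the final assertion of the lemma.

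It then remains only to evaluate $\Phi$ at $x^{*}$. Substituting $x^{*}=ps/(rq)$, the numerator simplifies to $(r^2q+p^2s)/(rq)$ and the radicand in the denominator to $s(r^2q+p^2s)/(r^2q)$; extracting the square root produces a factor $|r|$, and the quotient collapses to
\begin{equation}
\Phi(x^{*})=\frac{|r|}{r}\,\sqrt{\frac{r^2q+p^2s}{qs}}=sign(r)\sqrt{\frac{r^2}{s}+\frac{p^2}{q}}\enspace,
\end{equation}
as claimed. I do not expect any genuine obstacle here; the only place that needs care is sign bookkeeping — in particular writing $\sqrt{r^2}=|r|$ (not $r$) when pulling it out of the radical — so the derivative computation and this last substitution are the two steps where I would slow down.
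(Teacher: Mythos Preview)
Your proposal is correct and follows exactly the approach the paper indicates: differentiate, solve $\Phi'(x)=0$, and read off the sign of $\Phi'$ to classify the critical point. The paper's proof is a one-line sketch (``from solving $\partial\Phi/\partial x=0$ and checking the sign''), and your write-up simply fills in the details.
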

\begin{lemma}
Let
\begin{equation}
\Phi(x)=\frac{r+p\cdot x}{\sqrt{s+q\cdot x^2 +2tq\cdot x}}
\end{equation}
where $s+q\cdot x^2+2tq\cdot x > 0$ for all $x$. Let $\Phi^{*}$ be the maximal value of $\Phi(x)$ over $\mathds{R}^+$, i.e.,
$\Phi^{*}=\max_{x\in \mathds{R}^+}\Phi(x)\enspace.$
Then we have the following cases,
\begin{enumerate}
\item
 If $r-pt=0$ and $p>0$ then, $\Phi^{*}=\Phi(\infty)=p/\sqrt{q}\enspace.$
\item
If $r-pt=0$ and $p<0$ then, $\Phi^{*}=\Phi(0)=r/\sqrt{s}\enspace.$
\item
If $r-pt>0$ and $ps-rtq\geq 0$, let $x^{*}=(ps-rtq)/(rq-ptq)\geq 0$. Then,$ \Phi^{*}=\Phi(x^{*})=\sqrt{(r-pt)^2 /(s-qt^2)+p^2/q}\enspace.$
\item
If $r-pt>0$ and $ps-tqr<0$ then,
$\Phi^{*} = \Phi(0) = r/\sqrt{s}\enspace.$
\item
If $r-pt<0$ and $ps-tqr\geq 0$ then, $\Phi^{*}=\Phi(\infty)=p/\sqrt{q}\enspace. $
\item
If $r-pt<0$ and $ps-tqr<0$ then,\\
$\Phi^{*}=\max{(\Phi(\infty) ,\Phi(0))} = \max{ \left (p/\sqrt{q}, r/\sqrt{s}\right)}\enspace.$
\end{enumerate}
\label{lemma35}
\end{lemma}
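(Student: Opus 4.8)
The plan is to reduce $\Phi$ to the form treated in Lemma~\ref{lemma4} by completing the square in the denominator, and then to recover the maximum over the half-line $\mathds{R}^+$ (rather than over all of $\mathds{R}$) by a short unimodality argument.

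First I would write $s+qx^2+2tqx=q(x+t)^2+(s-qt^2)$ and substitute $y=x+t$. Evaluating the hypothesis ``$s+qx^2+2tqx>0$ for all $x$'' at $x=-t$ forces $q>0$ and $S:=s-qt^2>0$, so, setting $R:=r-pt$,
\[
\Phi(x)=\frac{R+py}{\sqrt{\,S+qy^2\,}},\qquad y=x+t\in[t,\infty),
\]
which is exactly the function of Lemma~\ref{lemma4} with parameters $(R,p,S,q)$, now to be maximised over $y\ge t$. By Lemma~\ref{lemma4}, if $R\ne0$ the unique critical point is $y^{*}=pS/(Rq)$; translating back gives $x^{*}=y^{*}-t=(ps-rtq)/(q(r-pt))$, matching the statement, and Lemma~\ref{lemma4} tells us that at $y^{*}$ the function has a global maximum when $R>0$ and a global minimum when $R<0$. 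Differentiating $\Phi$ directly shows the numerator of $\Phi'(x)$ is affine in $x$ with leading coefficient $-q(r-pt)=-qR$; hence $\Phi$ has at most one critical point and is \emph{unimodal} on $\mathds{R}$ — increasing then decreasing if $R>0$, decreasing then increasing if $R<0$ — which is what lets us pass from ``extremum on $\mathds{R}$'' to ``maximum on $[0,\infty)$''. The two endpoint values are $\Phi(0)=r/\sqrt{s}$ (at $y=t$) and $\Phi(\infty):=\lim_{x\to\infty}\Phi(x)=p/\sqrt{q}$.

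The six cases then follow by locating $y^{*}$ relative to $[t,\infty)$, i.e.\ by checking the sign of $x^{*}$. If $R=0$ then $\Phi$ is monotone in $y$: increasing if $p>0$, so the supremum is $\Phi(\infty)=p/\sqrt q$ (case~1); decreasing if $p<0$, so the maximum is $\Phi(0)=r/\sqrt s$ (case~2). If $R>0$, then since $q>0$ we have $x^{*}\ge0\iff ps-rtq\ge0$; when this holds the global maximum lies in the domain, giving $\Phi^{*}=\Phi(x^{*})=\sqrt{(r-pt)^2/(s-qt^2)+p^2/q}$ (which is $sign(R)\sqrt{R^2/S+p^2/q}$ for $R>0$), i.e.\ case~3; when $ps-rtq<0$ the peak sits at $x^{*}<0$, so $\Phi$ is decreasing throughout $[0,\infty)$ and the maximum is $\Phi(0)=r/\sqrt s$ (case~4). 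If $R<0$, then $x^{*}\ge0\iff ps-rtq\le0$: when $ps-rtq\ge0$ the global minimum $y^{*}$ lies at or below $t$, so $\Phi$ is increasing on $[t,\infty)$ and the supremum is $\Phi(\infty)=p/\sqrt q$ (case~5); when $ps-rtq<0$ the minimum is interior, $\Phi$ is ``U-shaped'' on the domain, and its maximum is attained at one of the two endpoints, namely $\max(r/\sqrt s,\,p/\sqrt q)$ (case~6).

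The only genuinely delicate point — the rest being sign bookkeeping under $y=x+t$ and the observation that $\sqrt{(r-pt)^2/(s-qt^2)+p^2/q}$ is just the Lemma~\ref{lemma4} value specialised to $R>0$ — is the step from ``Lemma~\ref{lemma4} identifies a global extremum on all of $\mathds{R}$'' to ``here is the maximum on $[0,\infty)$''. That is where the explicit computation of $\Phi'$ and the resulting unimodality do the real work, letting us conclude in each regime that the restricted maximum is either the interior extremum (when it lies in $[0,\infty)$) or an endpoint value, and in the $R<0$ interior-minimum case that it can be \emph{either} endpoint, which is exactly why case~6 keeps the $\max$.
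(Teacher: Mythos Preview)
Your proposal is correct and follows essentially the same route as the paper: complete the square to rewrite $\Phi$ in the form of Lemma~\ref{lemma4} via $y=x+t$, deduce $q>0$ and $s-qt^2>0$ from positivity of the denominator, read off the critical point and its nature from Lemma~\ref{lemma4}, and then do the six-case sign bookkeeping on $x^{*}$. One small imprecision: evaluating at $x=-t$ yields $S=s-qt^2>0$ but not $q>0$; the latter comes from the behaviour as $x\to\infty$ (equivalently, from the discriminant, which is how the paper argues it).
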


\section{New Aggressive ROMMA}
\label{SEC03}
We use the MCF to construct a non-conservative algorithm - the NAROMMA algorithm.  We start by formulating a local cosine bound from which we derive the best choice of $\lambda_i$ at each trial. Along the discussion in this section and Sect. \ref{SEC04}, we assume that the prerequisites of Lemma \ref{lemma3} apply, that is, for a sequence of examples $\{(\mathbf{a}_0, y_0),\cdots,(\mathbf{a}_T, y_T)\}$  there is  $\gamma >0$ and a separating hyperplane
$\mathbf{w}\in\mathds{R}^n$  with $\|\mathbf{w}\|_2 = 1$ such that $|\mathbf{w}^T\mathbf{a}_i| > \gamma$ for all $i$.
\subsection{The Local Cosine Bound for the NAROMMA Algorithm}
In this algorithm, $\mathbf{w}_1 = y_0\mathbf{a}_0$ where $\mathbf{a}_0$ is the first example received by the algorithm, and the update follows the perceptron paradigm, that is $\mathbf{w}_{i+1}= \mathbf{w}_i + \lambda_i(y_i\mathbf{a}_i)$. Let 
\begin{equation}
\gamma_i=\frac{y_i(\mathbf{w}_i^T\mathbf{a}_i)}{\|\mathbf{w}_i\|_2}
\label{gammai}
\end{equation}
which is the projection of $y_i\mathbf{a}_i$ on the direction of $\mathbf{w}_i$.
Let 
\begin{equation}
\delta_i\triangleq \frac{\cos(\theta_i)}{\gamma}=\frac{\mathbf{w}^T\mathbf{w}_i}{\gamma\|\mathbf{w}_i\|_2}=\frac{\alpha_i}{\gamma}\enspace,
\label{deltai}
\end{equation}
where $\alpha_i$ is as defined in~(\ref{alphai}). 
Then, by Lemma \ref{lemma3} we have,
\begin{equation}
\delta_{i+1}\geq \frac{\delta_i+x_i}{\sqrt{1+\|\mathbf{a}_i\|_2^2x_i^2+2\gamma_ix_i}}
\label{deltaiplus1}
\end{equation}
for all $x_i \geq 0 $. Since we cannot find an accurate evaluation for $\delta_i$, we will look alternatively for some lower bound $\ell_i$ for $\delta_i$, which will act as a local cosine bound for the algorithm, such that 
\begin{equation}
\delta_i \geq \ell_i
\end{equation}
for all $i$.
We start by choosing $\ell_1$,
\begin{equation}
\delta_1 = \frac{\cos(\theta_1)}{\gamma}=\frac{y_0(\mathbf{w}^T\mathbf{a}_0)}{\gamma\|\mathbf{a}_0\|_2}\geq \frac{1}{\|\mathbf{a}_0\|_2}=\ell_1\enspace .
\end{equation}
Assuming that $\delta_i \geq \ell_i$ for some $i$, then by (\ref{deltaiplus1}) we get,
\begin{equation}
\delta_{i+1}\geq \frac{\ell_i + x_i}{\sqrt{1+\|\mathbf{a}_i\|_2^2x_i^2+2\gamma_ix_i}}\enspace.
\label{cosineBndMCP}
\end{equation}
Inequality (\ref{cosineBndMCP}) formulates a local cosine bound for the NAROMMA algorithm. Therefore, since it holds for all $x_i\geq 0$, our next step is to find $x_i^*$ that maximizes the right-hand side of (\ref{cosineBndMCP}) and get an optimal lower bound for $\delta_i$. To achieve this, we start by the following lemma,
\begin{lemma}
Let  $\{(\mathbf{a}_0, y_0),\cdots,(\mathbf{a}_T, y_T)\}$  be a sequence of examples where $\mathbf{a}_i\in\mathds{R}^n$ and $y_i\in \{-1,+1\}$ for all $0\leq i \leq T$. Let $\gamma >0$ and  $\mathbf{w}\in\mathds{R}^n$ with $\|\mathbf{w}\|_2=1$, such that for all $0\leq i \leq T$, $|\mathbf{w}^T\mathbf{a}_i|\geq \gamma$. Let,
\begin{equation}
\ell_{i+1}(x_i,\ell_i)\triangleq \frac{\ell_i +x_i}{\sqrt{1+\|\mathbf{a}_i\|_2^2x_i^2+2\gamma_i x_i}}\enspace,
\label{elli}
\end{equation}
and
\begin{equation}
x_i^*(\ell_i)\triangleq\arg\max_{x_i\geq 0}\frac{\ell_i +x_i}{\sqrt{1+\|\mathbf{a}_i\|_2^2x_i^2+2\gamma_i x_i}}
\label{ximax}
\end{equation}
and finally,
\begin{equation}
\ell_{i+1}^*\triangleq\ell_{i+1}(x_i^*(\ell_i^*), \ell_i^*)\enspace,
\label{ellStar}
\end{equation}
where $\ell_1^*=\ell_1=\frac{1}{\|\mathbf{a}_0\|_2}\enspace.$ Then for all $i\geq 1$,
\begin{equation}
\delta_i \geq \ell_i^*\enspace.
\label{deltaell}
\end{equation}
\label{cosBndEllStar}
\end{lemma}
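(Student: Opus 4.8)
The plan is a straightforward induction on $i$, resting on two observations. The first is that inequality (\ref{deltaiplus1}) already exhibits the post-update quantity as bounded below by precisely the function $\ell_{i+1}(\cdot,\cdot)$ of (\ref{elli}): whatever update parameter $x_i\ge 0$ the algorithm applies at trial $i$, the resulting $\delta_{i+1}$ satisfies $\delta_{i+1}\ge \ell_{i+1}(x_i,\delta_i)$. The second is that for each fixed $x_i\ge 0$ the map $\ell\mapsto\ell_{i+1}(x_i,\ell)$ is monotone non-decreasing, since its numerator $\ell+x_i$ is increasing in $\ell$ while its denominator $\sqrt{1+\|\mathbf{a}_i\|_2^2x_i^2+2\gamma_ix_i}$ is a positive constant independent of $\ell$ (it equals $\|\mathbf{w}_{i+1}\|_2/\|\mathbf{w}_i\|_2$). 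Combining these is exactly what makes the recursively defined $\ell_i^*$ a valid running lower bound.

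For the base case $i=1$ I would simply invoke the inequality established just before the statement of the lemma, namely $\delta_1=y_0(\mathbf{w}^T\mathbf{a}_0)/(\gamma\|\mathbf{a}_0\|_2)\ge 1/\|\mathbf{a}_0\|_2=\ell_1=\ell_1^*$, which uses $\|\mathbf{w}_1\|_2=\|\mathbf{a}_0\|_2$, the fact that $\mathbf{w}$ is a separating hyperplane (so $y_0(\mathbf{w}^T\mathbf{a}_0)\ge\gamma$), and the definition (\ref{deltai}). For the inductive step, assume $\delta_i\ge\ell_i^*$. At trial $i$ the algorithm uses $x_i=x_i^*(\ell_i^*)$ from (\ref{ximax}); applying (\ref{deltaiplus1}) with this value of $x_i$, and then the monotonicity observation together with the inductive hypothesis, yields
\[ \delta_{i+1}\ \ge\ \ell_{i+1}\bigl(x_i^*(\ell_i^*),\,\delta_i\bigr)\ \ge\ \ell_{i+1}\bigl(x_i^*(\ell_i^*),\,\ell_i^*\bigr)\ =\ \ell_{i+1}^*, \]
the last equality being the definition (\ref{ellStar}). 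This closes the induction.

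The one point that will need care is the interpretation of $x_i^*(\ell_i^*)$ when the supremum in (\ref{ximax}) is not attained at a finite point: by Lemma \ref{lemma35} (the cases with $\Phi^{*}=\Phi(\infty)$) the maximizing value may be $x_i=\infty$. There I would read $\ell_{i+1}^*$ as $\lim_{x\to\infty}\ell_{i+1}(x,\ell_i^*)=1/\|\mathbf{a}_i\|_2$, and since the bound $\delta_{i+1}\ge\ell_{i+1}(x,\ell_i^*)$ holds for every finite $x\ge 0$ it survives the limit, so the displayed chain is unaffected. I would also record, so as to legitimately invoke Lemma \ref{lemma35}, that the radicand $1+\|\mathbf{a}_i\|_2^2x_i^2+2\gamma_ix_i=\|\mathbf{w}_{i+1}\|_2^2/\|\mathbf{w}_i\|_2^2$ is strictly positive, which is what makes $\ell_{i+1}(x_i,\ell_i^*)$ well defined for all $x_i\ge 0$. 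Beyond this bookkeeping there is no real obstacle: all the substance is packed into Lemmas \ref{lemma3} and \ref{lemma35}, which are available, and the present lemma only threads them through an induction.
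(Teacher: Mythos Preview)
Your proposal is correct and follows essentially the same route as the paper: induction on $i$, base case via $y_0(\mathbf{w}^T\mathbf{a}_0)\ge\gamma$, and the inductive step by applying (\ref{deltaiplus1}) with the induction hypothesis $\delta_i\ge\ell_i^*$ (your monotonicity observation is exactly how the paper passes from $\delta_i$ to $\ell_i^*$ in the numerator) and then specializing to $x_i=x_i^*(\ell_i^*)$. Your extra remarks on the $x_i^*=\infty$ case and positivity of the radicand are handled by the paper in the paragraph immediately following the lemma rather than inside the proof itself, but the substance is the same.
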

\begin{proof}
We prove by induction. For $i=1$ we have,
$\delta_1 = \frac{\cos(\theta_1)}{\gamma}=\frac{y_0(\mathbf{w}^T\mathbf{a}_0)}{\gamma\|\mathbf{a}_0\|_2}\geq \frac{1}{\|\mathbf{a}_0\|_2}=\ell_1=\ell_1^*\enspace.$
Assume that the lemma holds for $i=j$, i.e. $\delta_j\geq\ell_j^*$, we prove for $i=j+1$. By (\ref{deltaiplus1}) and the induction assumption we get for all $x_j \geq 0$,
$\delta_{j+1}\geq  \frac{\ell_j^*+x_j}{\sqrt{1+\|\mathbf{a}_j\|_2^2x_j^2+2\gamma_jx_j}}\enspace.$
Specifically, the above inequality holds for $x_j=x_j^*(\ell_j^*)$. Using (\ref{elli}) and (\ref{ellStar}) we get,
$\delta_{j+1}\geq  \frac{\ell_j^*+x_j^*(\ell_j^*)}{\sqrt{1+\|\mathbf{a}_j\|_2^2{x_j^*(\ell_j^*)}^2+2\gamma_jx_j^*(\ell_j^*)}}=\ell_{j+1}^*\enspace,$
which proves our lemma.
\qed\end{proof}
The above discussion implicitly assumes that (\ref{elli}) and (\ref{ximax}) are well defined for all $x_i\geq 0$, and that, given some $\ell_i^*$, we can easily determine $x_i^*(\ell_i^*)$ by solving the optimization problem implied by (\ref{ximax}). Therefore, for completeness of the discussion, we first need to show that $1+\|\mathbf{a}_i\|_2^2x_i^2+2\gamma_ix_i > 0$. Second, we need to propose some direct solution from which the optimal $x_i^*$ can be obtained. By calculating the discriminant of  $1+\|\mathbf{a}_i\|_2^2x_i^2+2\gamma_ix_i$, one can conclude that it has a negative value for all $x_i\geq 0$ when  $|\mathbf{w}_i^T\mathbf{a}_i|/(\|\mathbf{w}_i\|_2\|\mathbf{a}_i\|_2) <1$. It is evident that $|\mathbf{w}_i^T\mathbf{a}_i|/\|\mathbf{w}_i\|_2\|\mathbf{a}_i\|_2 = 1$ if and only if $\mathbf{a}_i$ and $\mathbf{w}_i$ are linearly dependent. However, in this case the update will not change the direction of $\mathbf{w}_{i+1}$ regardless of the choice of $\lambda_i$ and hence such examples will be disregarded by the algorithm.  
The following lemma provides a direct way to calculate $\ell_i^*$ and $x_i^*$ for all $i$.
\begin{lemma}
Let  $\{(\mathbf{a}_0, y_0),\cdots,(\mathbf{a}_T, y_T)\}$  be a sequence of examples where $\mathbf{a}_i\in\mathds{R}^n$ and $y_i\in \{-1,+1\}$ for all $0\leq i \leq T$. Let $\gamma >0$ and  $\mathbf{w}\in\mathds{R}^n$ with $\|\mathbf{w}\|_2=1$, such that for all $0\leq i \leq T$, $|\mathbf{w}^T\mathbf{a}_i|\geq \gamma$ . Let $\Phi(x_i)=\frac{\ell_i^*+x_i}{\sqrt{1+\|\mathbf{a}_i\|_2^2x_i^2+2\gamma_ix_i}}$, 
and let $x_i^*=arg\max_{x_i\geq0}\Phi(x_i)$. If $\gamma_i>0$ then, 
\begin{equation}
x_i^* = \left\{ \begin{array}{ll}
\infty&,\ell_i^*\leq\frac{\gamma_i}{\|\mathbf{a}_i\|_2^2}\\
\frac{1-\ell_i^*\gamma_i}{\ell_i^*\|\mathbf{a}_i\|_2^2-\gamma_i}&,\frac{1}{\gamma_i}>\ell_i^*>\frac{\gamma_i}{\|\mathbf{a}_i\|_2^2}\\
0&,\ell_i^* \geq\frac{1}{\gamma_i}
\end{array}\enspace. \right.
\end{equation}\\
If $\gamma_i\leq 0$ then,
\begin{equation}
x_i^*=\frac{1-\ell_i^*\gamma_i}{\ell_i^*\|\mathbf{a}_i\|_2^2-\gamma_i}\enspace.
\end{equation}\\
Also, 
\begin{equation}
{\ell_{i+1}^*}^2 = \left\{ \begin{array}{ll}
\frac{1}{\|\mathbf{a}_i\|_2^2}&, x_i^*=\infty\\
{\ell_i^*}^2&, x_i^*=0\\
\ell_i^2+ \frac{(1-\ell_i^*\gamma_i)^2}{\|\mathbf{a}_i\|_2^2-\gamma_i^2}&, otherwise\\
\end{array} \right.
\end{equation}
where $\ell_1^* = 1/\|\mathbf{a}_0\|_2$.
\label{lemma5_2}
\end{lemma}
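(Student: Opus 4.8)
The plan is to obtain both the optimal multiplier $x_i^*$ and the resulting value ${\ell_{i+1}^*}^2$ as a direct instance of Lemma~\ref{lemma35}. Matching $\Phi(x_i)=\frac{\ell_i^*+x_i}{\sqrt{1+\|\mathbf{a}_i\|_2^2x_i^2+2\gamma_ix_i}}$ against the template $\Phi(x)=\frac{r+px}{\sqrt{s+qx^2+2tqx}}$ of that lemma, I read off $r=\ell_i^*$, $p=1$, $s=1$, $q=\|\mathbf{a}_i\|_2^2$ and $t=\gamma_i/\|\mathbf{a}_i\|_2^2$. The hypothesis $s+qx^2+2tqx>0$ for all $x$ needed by Lemma~\ref{lemma35} is exactly the positivity of $1+\|\mathbf{a}_i\|_2^2x_i^2+2\gamma_ix_i$ discussed just before the statement (its discriminant $4\gamma_i^2-4\|\mathbf{a}_i\|_2^2<0$ unless $\mathbf{a}_i,\mathbf{w}_i$ are linearly dependent, a case the algorithm discards); in particular $s-qt^2=(\|\mathbf{a}_i\|_2^2-\gamma_i^2)/\|\mathbf{a}_i\|_2^2>0$. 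With this identification $r-pt=\ell_i^*-\gamma_i/\|\mathbf{a}_i\|_2^2$, $ps-rtq=1-\ell_i^*\gamma_i$ and $rq-ptq=\ell_i^*\|\mathbf{a}_i\|_2^2-\gamma_i$, so the six cases of Lemma~\ref{lemma35} become conditions on $\ell_i^*$, $\gamma_i$ and $\|\mathbf{a}_i\|_2$.

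Before splitting into cases I would record two facts. First, $\ell_i^*>0$ for every $i$: it holds for $\ell_1^*=1/\|\mathbf{a}_0\|_2$, and each of the three candidate formulas for ${\ell_{i+1}^*}^2$ is strictly positive (the ``otherwise'' one because $\|\mathbf{a}_i\|_2^2-\gamma_i^2>0$), so positivity propagates by induction. Second, Cauchy--Schwarz gives $|\gamma_i|=|\mathbf{w}_i^T\mathbf{a}_i|/\|\mathbf{w}_i\|_2\le\|\mathbf{a}_i\|_2$, hence $\gamma_i^2\le\|\mathbf{a}_i\|_2^2$ and, when $\gamma_i>0$, also $\gamma_i/\|\mathbf{a}_i\|_2^2\le 1/\gamma_i$. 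These two facts eliminate the two ``bad'' cases of Lemma~\ref{lemma35}: Case~2 requires $p<0$, impossible since $p=1$; and Case~6 requires $\ell_i^*<\gamma_i/\|\mathbf{a}_i\|_2^2$ together with $\ell_i^*\gamma_i>1$, which for $\gamma_i>0$ forces $1<\ell_i^*\gamma_i<\gamma_i^2/\|\mathbf{a}_i\|_2^2\le 1$, and for $\gamma_i\le 0$ contradicts $r-pt\ge\ell_i^*>0$.

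The case analysis is then bookkeeping. Suppose $\gamma_i>0$. If $\ell_i^*\le\gamma_i/\|\mathbf{a}_i\|_2^2$ then $r-pt\le 0$ and, since then $\ell_i^*\gamma_i\le\gamma_i^2/\|\mathbf{a}_i\|_2^2\le 1$, also $ps-rtq\ge 0$; this is Case~1 (equality) or Case~5 (strict), giving $\Phi^*=\Phi(\infty)=1/\|\mathbf{a}_i\|_2$, i.e. $x_i^*=\infty$ and ${\ell_{i+1}^*}^2=1/\|\mathbf{a}_i\|_2^2$. If $\gamma_i/\|\mathbf{a}_i\|_2^2<\ell_i^*<1/\gamma_i$ then $r-pt>0$ and $ps-rtq>0$, so Case~3 applies with $x_i^*=(ps-rtq)/(rq-ptq)=(1-\ell_i^*\gamma_i)/(\ell_i^*\|\mathbf{a}_i\|_2^2-\gamma_i)$. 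If $\ell_i^*\ge 1/\gamma_i$ then $ps-rtq\le 0$ while $r-pt>0$ (using $1/\gamma_i\ge\gamma_i/\|\mathbf{a}_i\|_2^2$), so Case~4 gives $\Phi^*=\Phi(0)=\ell_i^*$, i.e. $x_i^*=0$ and ${\ell_{i+1}^*}^2={\ell_i^*}^2$; at the boundary $\ell_i^*=1/\gamma_i$ the Case~3 formulas also collapse to $x_i^*=0$, so the two descriptions agree. Finally, if $\gamma_i\le 0$ then $r-pt=\ell_i^*-\gamma_i/\|\mathbf{a}_i\|_2^2\ge\ell_i^*>0$, $ps-rtq=1-\ell_i^*\gamma_i\ge 1>0$ and $rq-ptq=\ell_i^*\|\mathbf{a}_i\|_2^2-\gamma_i>0$, so we are always in Case~3, which gives the single stated formula for $x_i^*$ and the ``otherwise'' formula for ${\ell_{i+1}^*}^2$.

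The only genuinely computational step, and the place I expect to have to be careful, is simplifying the Case~3 value: Lemma~\ref{lemma35} gives $\Phi^*=\sqrt{(r-pt)^2/(s-qt^2)+p^2/q}$, and I must verify this equals $\sqrt{{\ell_i^*}^2+(1-\ell_i^*\gamma_i)^2/(\|\mathbf{a}_i\|_2^2-\gamma_i^2)}$, the claimed ``otherwise'' value of ${\ell_{i+1}^*}^2$ (the $\ell_i^2$ in the displayed formula should read ${\ell_i^*}^2$). Writing $A=\|\mathbf{a}_i\|_2^2$, $g=\gamma_i$, $L=\ell_i^*$, one has $(r-pt)^2=(LA-g)^2/A^2$, $s-qt^2=(A-g^2)/A$ and $p^2/q=1/A$, so after multiplying through by $A(A-g^2)>0$ the identity reduces to $(LA-g)^2=L^2A(A-g^2)+A(1-Lg)^2-(A-g^2)$; expanding the right-hand side yields $L^2A^2-2LgA+g^2=(LA-g)^2$, as needed. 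Together with the merging of Cases~1/5 and Cases~3/4 at their boundaries, this completes the proof; everything else is a transcription of Lemma~\ref{lemma35}.
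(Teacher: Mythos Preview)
Your proposal is correct and follows essentially the same route as the paper: identify $\Phi$ with the template of Lemma~\ref{lemma35} via $r=\ell_i^*$, $p=s=1$, $q=\|\mathbf{a}_i\|_2^2$, $t=\gamma_i/\|\mathbf{a}_i\|_2^2$, use Cauchy--Schwarz to get $\gamma_i/\|\mathbf{a}_i\|_2^2<1/\gamma_i$ when $\gamma_i>0$, and then read off the subcases. Your write-up is in fact a bit more careful than the paper's (you explicitly rule out Cases~2 and~6, verify the algebraic identity for ${\ell_{i+1}^*}^2$ in Case~3, and flag the $\ell_i^2$ vs.\ ${\ell_i^*}^2$ typo), but the argument is the same.
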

\begin{proof}
We use Lemma \ref{lemma35}. Let $r=\ell_i^*$, $p=s=1$, $q=\|\mathbf{a}_i\|_2^2$ and $t=\gamma_i/\|\mathbf{a}_i\|_2^2$. Then 
$r-pt = \ell_i^*-\frac{\gamma_i}{\|\mathbf{a}_i\|_2^2}$, and $ps-rtq=1-\ell_i^*\gamma_i$.
If $\gamma_i\leq 0$ then both $r-pt$ and $ps-rtq$ are positive, and, by Lemma \ref{lemma35} (case 3), we get that the optimal value is in $x_i^* = \frac{(ps-rtq)}{(rq-ptq)}=\frac{1-\ell_i^*\gamma_i}{\ell_i^*\|\mathbf{a}_i\|_2^2-\gamma_i}\enspace.$
Now, suppose $\gamma_i >0$. Since $\gamma_i=y_i(\mathbf{w}_i^T\mathbf{a}_i)/\|\mathbf{w}_i\|_2>0$, then $\gamma_i/\|\mathbf{a}_i\|_2=y_i(\mathbf{w}_i^T\mathbf{a}_i)/\|\mathbf{w}_i\|_2\|\mathbf{a}_i\|_2\leq 1$ and hence, we get $\gamma_i < \|\mathbf{a}_i\|_2$. Since $\gamma_i < \|\mathbf{a}_i\|_2$, we have $\gamma_i/\|\mathbf{a}_i\|_2^2<1/\|\mathbf{a}_i\|_2 < 1/\gamma_i$.  Therefore, we have the following three cases for $\ell_i^*$:
\begin{description}
\item [Case I.]
$\ell_i^*\leq\gamma_i/\|\mathbf{a}_i\|_2^2$. Then, $r-pt = \ell_i^*-\gamma_i/\|\mathbf{a}_i\|_2^2\leq 0$ and, $ps-rtq = 1-\ell_i^*\gamma_i>0$. By cases 1 and 5 in Lemma \ref{lemma35}, we get $x_i^* = \infty$ and $\ell_{i+1}^*= 1/\|\mathbf{a}_i\|_2$.
\item[Case II.]
$1/\gamma_i>\ell_i^*>\gamma_i/\|\mathbf{a}_i\|_2^2$. Then, $r-pt=\ell_i^*-\gamma_i/\|\mathbf{a}_i\|_2^2>0$ and, $ps-rtq=1-\ell_i^*\gamma_i>0$. Hence, by case 3 in Lemma \ref{lemma35} we get $x_i^*=\frac{1-\ell_i^*\gamma_i}{\ell_i^*\|\mathbf{a}_i\|_2^2\gamma_i}$ and
\begin{equation}
\ell_{i+1}^*  =  \ell_{i+1}(x_i^*,\ell_i^*) = \frac{\ell_i^*+x_i^*}{\sqrt{1+\|\mathbf{a}_i\|_2^2{x_i^*}^2+2\gamma_ix_i^*}}\enspace.
\label{liOpt}
\end{equation}
By applying the value of $x_i^*$ into (\ref{liOpt}) the result follows.
\item[Case III.]
$1/\gamma_i\leq\ell_i^*$. Since $1/\gamma_i > \gamma_i/\|\mathbf{a}_i\|_2^2 $ then, $r-pt=\ell_i^*-\gamma_i/\|\mathbf{a}_i\|_2^2 >0$ and $ps-rtq=1-\ell_i^*\gamma_i\leq 0$. Therefore, using case 4 in Lemma \ref{lemma35} we get the result.
\end{description}
\qed \end{proof}
Recall the definition of $x_i$ from (\ref{Xi}).The implication of taking $x_i$ to $\infty$ in the above update is to change the orientation of $\mathbf{w}_i$ to be the same as $\mathbf{a}_i$. That is, in case of $x_i^*=\infty$ we get, $\mathbf{w}_{i+1}= y_i\mathbf{a}_i$ and
$\ell_{i+1}^*=1/\|\mathbf{a}_i\|_2$. We have just proved,
\begin{lemma}
Let  $\{(\mathbf{a}_0, y_0),\cdots,(\mathbf{a}_T, y_T)\}$  be a sequence of examples where $\mathbf{a}_i\in\mathds{R}^n$ and $y_i\in \{-1,+1\}$ for all $0\leq i \leq T$. Let $\gamma >0$ and  $\mathbf{w}\in\mathds{R}^n$ with $\|\mathbf{w}\|_2=1$, such that for all $0\leq i \leq T$, $|\mathbf{w}^T\mathbf{a}_i|\geq \gamma$ .Let $\mathbf{w}_1=y_0\mathbf{a}_0$, $\ell_1^* = 1/\|\mathbf{a}_0\|_2$ and 
\begin{equation}
\mathbf{w}_{i+1} = \left\{ \begin{array}{ll}
\mathbf{w}_i& ,\gamma_i \geq \frac{1}{\ell_i^*},\\
y_i\mathbf{a}_i& ,\frac{1}{\ell_i^*} > \gamma_i \geq \|\mathbf{a}_i\|_2^2\ell_i^*\\
\mathbf{w}_i+\frac{(1-\ell_i^*\gamma_i)\|\mathbf{w}_i\|_2}{\ell_i^*\|\mathbf{a}_i\|_2^2-\gamma_i}(y_i\mathbf{a}_i)& ,\gamma_i < \min\{\ell_i^*\|\mathbf{a}_i\|_2^2, \frac{1}{\ell_i^*}\}.
\end{array} \right.
\label{NAROMMAUpdate}
\end{equation} 
and
\begin{equation}
{\ell_{i+1}^*}^2= \left\{ \begin{array}{ll}
{\ell_i^*}^2 & , \gamma_i \geq \frac{1}{\ell_i^*},\\
\frac{1}{\|\mathbf{a}_i\|_2^2} &, \frac{1}{\ell_i^*} > \gamma_i \geq \|\mathbf{a}_i\|_2^2\ell_i^*\\
{\ell_i^*}^2 + \frac{(\ell_i^*\gamma_i-1)^2}{\|\mathbf{a}_i\|_2^2 - \gamma_i^2  }& ,\gamma_i < \min\{\ell_i^*\|\mathbf{a}_i\|_2^2, \frac{1}{\ell_i^*}\}.
\end{array} \right.
\label{NAROMMAELLI}
\end{equation}
and $\gamma_i$ is as in (\ref{gammai}).
Then after the $i$th update the cosine of the angle between $\mathbf{w}_i$ and $\mathbf{w}$ is at least $$\cos(\theta_i) = \gamma\delta_i \geq\gamma\ell_i^*.$$
\label{lemma5}
\end{lemma}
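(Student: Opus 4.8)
The plan is to recognize that Lemma~\ref{lemma5} is essentially a repackaging of Lemmas~\ref{cosBndEllStar} and~\ref{lemma5_2}, glued together by the bookkeeping that converts the optimal step size $x_i^*$ into the update coefficient $\lambda_i$. By the definition~(\ref{deltai}) we have $\cos(\theta_i)=\gamma\delta_i$, so it suffices to establish $\delta_i\geq\ell_i^*$ for every $i$ together with the explicit recursions~(\ref{NAROMMAUpdate}) and~(\ref{NAROMMAELLI}). For the inequality I would simply invoke Lemma~\ref{cosBndEllStar}, once it is checked that the algorithm described by~(\ref{NAROMMAUpdate}) is exactly the one that on trial $i$ picks the step $x_i=x_i^*(\ell_i^*)$ maximizing the local cosine bound~(\ref{cosineBndMCP}). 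Thus the real content of the proof is this identification, plus reading off the closed forms from Lemma~\ref{lemma5_2}.

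To carry out the identification, recall $x_i=\lambda_i/\|\mathbf{w}_i\|_2$ from~(\ref{Xi}), so that setting $\lambda_i=x_i^*(\ell_i^*)\|\mathbf{w}_i\|_2$ in $\mathbf{w}_{i+1}=\mathbf{w}_i+\lambda_i y_i\mathbf{a}_i$ and substituting the value of $x_i^*$ from Lemma~\ref{lemma5_2} gives precisely the third branch of~(\ref{NAROMMAUpdate}); the case $x_i^*=0$ yields $\lambda_i=0$, i.e. $\mathbf{w}_{i+1}=\mathbf{w}_i$; and the case $x_i^*=\infty$ is handled by the limiting remark noted just before the lemma, namely $\mathbf{w}_{i+1}/\lambda_i=\mathbf{w}_i/\lambda_i+y_i\mathbf{a}_i\to y_i\mathbf{a}_i$ as $\lambda_i\to\infty$, so that the direction of $\mathbf{w}_{i+1}$ — the only quantity on which $\alpha_{i+1}$ and $\ell_{i+1}^*$ depend, both being scale-invariant cosines — may be taken to be that of $y_i\mathbf{a}_i$; this gives $\ell_{i+1}^*=1/\|\mathbf{a}_i\|_2$ and $\delta_{i+1}=\alpha_{i+1}/\gamma\geq (y_i\mathbf{w}^T\mathbf{a}_i)/(\gamma\|\mathbf{a}_i\|_2)\geq 1/\|\mathbf{a}_i\|_2$, consistent with~(\ref{NAROMMAELLI}). (The $x_i^*=0$ branch of $\delta_i\geq\ell_i^*$ is trivial since then $\delta_{i+1}=\delta_i$.)

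It then remains to reconcile the case labels. For $\gamma_i>0$, Lemma~\ref{lemma5_2} splits on the position of $\ell_i^*$ relative to $\gamma_i/\|\mathbf{a}_i\|_2^2$ and $1/\gamma_i$; clearing denominators, $x_i^*=0\iff\gamma_i\geq 1/\ell_i^*$, $x_i^*=\infty\iff\gamma_i\geq\|\mathbf{a}_i\|_2^2\ell_i^*$, and the intermediate value occurs otherwise. Using the bound $\gamma_i<\|\mathbf{a}_i\|_2$ (a consequence of the Cauchy--Schwarz inequality $\gamma_i/\|\mathbf{a}_i\|_2\leq 1$, established inside the proof of Lemma~\ref{lemma5_2}), one checks that $x_i^*=\infty$ forces $\ell_i^*<1/\|\mathbf{a}_i\|_2$ and hence $\gamma_i<1/\ell_i^*$, which makes the three conditions $\gamma_i\geq 1/\ell_i^*$, $\ 1/\ell_i^*>\gamma_i\geq\|\mathbf{a}_i\|_2^2\ell_i^*$, $\ \gamma_i<\min\{\ell_i^*\|\mathbf{a}_i\|_2^2,\,1/\ell_i^*\}$ mutually exclusive and exhaustive, matching~(\ref{NAROMMAUpdate}). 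For $\gamma_i\leq 0$ both $\gamma_i<1/\ell_i^*$ and $\gamma_i<\|\mathbf{a}_i\|_2^2\ell_i^*$ hold trivially, so this case lands in the third branch, where Lemma~\ref{lemma5_2} supplies the same intermediate formula; substituting the corresponding $x_i^*$ into $\ell_{i+1}(x_i^*,\ell_i^*)$ and using the closed forms of Lemma~\ref{lemma5_2} produces~(\ref{NAROMMAELLI}). Combining with $\delta_i\geq\ell_i^*$ and $\cos(\theta_i)=\gamma\delta_i$ closes the argument.

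I expect the main obstacle to be purely this case-bookkeeping: verifying that the three regimes in~(\ref{NAROMMAUpdate})--(\ref{NAROMMAELLI}) are exactly the image of the regimes of Lemma~\ref{lemma5_2} under the substitution $x_i\leftrightarrow\lambda_i/\|\mathbf{w}_i\|_2$, in particular handling the degenerate $x_i^*=\infty$ regime via the limiting/scale-invariance remark (rather than a literal substitution) and folding the $\gamma_i\leq 0$ situation into the generic branch. Everything else is immediate from the lemmas already proved.
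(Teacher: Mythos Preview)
Your proposal is correct and mirrors the paper's own argument: the paper states Lemma~\ref{lemma5} immediately after Lemma~\ref{lemma5_2} with the words ``We have just proved,'' meaning its proof is precisely the combination of Lemma~\ref{cosBndEllStar} (for $\delta_i\geq\ell_i^*$), Lemma~\ref{lemma5_2} (for the closed forms of $x_i^*$ and $\ell_{i+1}^*$), and the preceding remark that $x_i^*=\infty$ amounts to setting $\mathbf{w}_{i+1}=y_i\mathbf{a}_i$. Your case-reconciliation between the $\ell_i^*$-thresholds of Lemma~\ref{lemma5_2} and the $\gamma_i$-thresholds of~(\ref{NAROMMAUpdate})--(\ref{NAROMMAELLI}) is exactly the bookkeeping implicit in the paper's ``We have just proved.''
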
  
Figure \ref{MCPAlg} summarizes the NAROMMA algorithm.\\ 

\begin{figure}
%\vspace{1cm}
\begin{center}
\boxer{ \setlength{\unitlength}{1cm}
\noindent{\textbf{The NAROMMA algorithm}}
\begin{small}
\renewcommand{\labelenumii}{\theenumii}
\renewcommand{\theenumii}{\theenumi.\arabic{enumii}.}
\begin{enumerate}
\item Get $(\mathbf{a}_0,y_0); \ \mathbf{w}_1\gets (y_0\mathbf{a}_0); \ i\gets 1;\ \ell_1^{*}\gets \frac{1}{\|\mathbf{a}_0\|_2}  $.
\item   Get $(\mathbf{x},y)$.
\item $\gamma_i \gets y(\mathbf{w}_i^T\mathbf{x})/\|\mathbf{w}_i\|_2.$
\item  If $|(\mathbf{w}_i^T\mathbf{x})|/(\|\mathbf{w}_i\|_2\|\mathbf{x}\|_2) = 1$, go to 8.
\item  If $\gamma_i \geq 1/\ell_i^*$, then,
	\begin{enumerate}
		\item $\mathbf{a}_i\gets \mathbf{x}; \ y_i\gets y; \ \mathbf{w}_{i+1} \gets \mathbf{w}_i;$ $\ell_{i+1}^* \gets \ell_{i}^*.$
	\end{enumerate}
\item  If $ \frac{1}{\ell_i^*} > \gamma_i \geq \|\mathbf{a}_i\|_2^2\ell_i^*$, then,
	\begin{enumerate}
		\item $\mathbf{a}_i\gets \mathbf{x}; \ y_i\gets y; \ \mathbf{w}_{i+1}\gets y_i\mathbf{a}_i$; $\ell_{i+1}^*\gets 1/\|\mathbf{a}_i\|_2.$
	\end{enumerate}
\item If $\gamma_i < \min\{\ell_i^*\|\mathbf{a}_i\|_2^2, \frac{1}{\ell_i^*}\}$, then,
	\begin{enumerate}
		\item  $\mathbf{a}_i\gets \mathbf{x}; \ y_i\gets y;$ \  
      \item    $\mathbf{w}_{i+1}\gets \mathbf{w}_i+\frac{(1-\ell_i^*\gamma_i)\|\mathbf{w}_i\|_2}{\ell_i^*\|\mathbf{a}_i\|_2^2-\gamma_i}(y_i \mathbf{a}_i); \ell_{i+1}^*\gets \sqrt{{\ell_i^*}^2 + \frac{(\ell_i^*\gamma_i-1)^2}{\|\mathbf{a}_i\|_2^2 - \gamma_i^2}}$.

	\end{enumerate}
\item  $i\gets i+1; $ Go to 2.
\end{enumerate}
\end{small}
}
\end{center}
\caption[ ]{ The NAROMMA algorithm for binary classification.}
\label{MCPAlg}
\end{figure}

\subsection{Equivalence to Aggressive ROMMA}
To prove  that NAROMMA is equivalent to Aggressive ROMMA, it is enough to show that the algorithms' vectors have the same orientation after each update.
\begin{theorem} Let $\mathbf{u}_i$ denote the algorithm hypothesis used by algorithm Aggressive  ROMMA after the $i$th update. Let $\mathbf{u}_1 =(y_0/\|\mathbf{a}_0\|_2^2)\mathbf{a}_0$. Let $\mathbf{v}_i$ be the hypothesis of the NAROMMA algorithm, and let  $\mathbf{v}_1=y_0\mathbf{a}_0$, $\ell_1 = 1/\|\mathbf{a}_0\|_2$.  Then for all $i$,
\begin{enumerate}
\item $\|\mathbf{u}_i\|_2 = \ell_i^*$.
\item There exists some $\tau_i > 0$ such that, $\mathbf{u}_i = \tau_i\mathbf{v}_i$.
\end{enumerate}
\end{theorem}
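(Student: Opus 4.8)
The plan is to prove both claims simultaneously by induction on $i$, using the closed-form update of Aggressive ROMMA from~\cite{ROMMA}. Recall that on example $(\mathbf{a}_i,y_i)$ Aggressive ROMMA sets $\mathbf{u}_{i+1}$ to be the minimum $\ell_2$-norm vector $\mathbf{u}$ satisfying $\mathbf{u}^T\mathbf{u}_i\ge\|\mathbf{u}_i\|_2^2$ and $y_i(\mathbf{u}^T\mathbf{a}_i)\ge 1$; writing $A=\|\mathbf{a}_i\|_2^2$, $B=y_i(\mathbf{u}_i^T\mathbf{a}_i)$, $C=\|\mathbf{u}_i\|_2^2$, the optimum falls into three regimes: (i) if $B\ge 1$ then $\mathbf{u}_i$ is already feasible and $\mathbf{u}_{i+1}=\mathbf{u}_i$; (ii) if $AC\le B<1$ only the margin constraint is active and $\mathbf{u}_{i+1}=(y_i/A)\mathbf{a}_i$; (iii) if $B<\min\{1,AC\}$ both constraints are active and $\mathbf{u}_{i+1}=c_i\mathbf{u}_i+d_i\,y_i\mathbf{a}_i$ with $c_i=(AC-B)/(AC-B^2)$ and $d_i=C(1-B)/(AC-B^2)$. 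The initialization $\mathbf{u}_1=(y_0/\|\mathbf{a}_0\|_2^2)\mathbf{a}_0$ is exactly the minimum-norm $\mathbf{u}$ with $y_0(\mathbf{u}^T\mathbf{a}_0)=1$, so the base case is immediate: $\|\mathbf{u}_1\|_2=1/\|\mathbf{a}_0\|_2=\ell_1^*$ and $\mathbf{u}_1=(1/\|\mathbf{a}_0\|_2^2)\mathbf{v}_1$, giving $\tau_1=1/\|\mathbf{a}_0\|_2^2>0$.

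For the inductive step, assume $\|\mathbf{u}_i\|_2=\ell_i^*$ and $\mathbf{u}_i=\tau_i\mathbf{v}_i$ with $\tau_i>0$, so that $\mathbf{u}_i$ and $\mathbf{w}_i=\mathbf{v}_i$ point in the same direction. The bridge between the two algorithms is the observation that $B=y_i(\mathbf{u}_i^T\mathbf{a}_i)=\gamma_i\|\mathbf{u}_i\|_2=\gamma_i\ell_i^*$, together with $C=(\ell_i^*)^2$ and $A=\|\mathbf{a}_i\|_2^2$. Substituting these, ROMMA's three regimes become, respectively, $\gamma_i\ge 1/\ell_i^*$; $\|\mathbf{a}_i\|_2^2\ell_i^*\le\gamma_i<1/\ell_i^*$; and $\gamma_i<\min\{\ell_i^*\|\mathbf{a}_i\|_2^2,1/\ell_i^*\}$ -- precisely the three cases of the NAROMMA update~(\ref{NAROMMAUpdate}). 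At this point I would also invoke the fact, established in the paragraph following Lemma~\ref{lemma5_2}, that $|\gamma_i|<\|\mathbf{a}_i\|_2$ for every example the algorithm acts on, so $\|\mathbf{a}_i\|_2^2-\gamma_i^2>0$ and all the denominators below are strictly positive.

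It then suffices to check, case by case, that the two invariants propagate. Cases (i) and (ii) are immediate: in (i) take $\tau_{i+1}=\tau_i$; in (ii) NAROMMA sets $\mathbf{v}_{i+1}=y_i\mathbf{a}_i$, $\ell_{i+1}^*=1/\|\mathbf{a}_i\|_2$, while ROMMA gives $\mathbf{u}_{i+1}=(1/\|\mathbf{a}_i\|_2^2)\mathbf{v}_{i+1}$, so $\tau_{i+1}=1/\|\mathbf{a}_i\|_2^2>0$ and $\|\mathbf{u}_{i+1}\|_2=1/\|\mathbf{a}_i\|_2=\ell_{i+1}^*$. The substance is case (iii). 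Plugging $B=\gamma_i\ell_i^*$, $C=(\ell_i^*)^2$, $A=\|\mathbf{a}_i\|_2^2$ into $c_i,d_i$ and simplifying yields $c_i=(\|\mathbf{a}_i\|_2^2\ell_i^*-\gamma_i)/(\ell_i^*(\|\mathbf{a}_i\|_2^2-\gamma_i^2))>0$ and $d_i=(1-\gamma_i\ell_i^*)/(\|\mathbf{a}_i\|_2^2-\gamma_i^2)$. Since $\mathbf{u}_{i+1}=c_i\tau_i\mathbf{v}_i+d_i\,y_i\mathbf{a}_i$, matching it to $\tau_{i+1}\mathbf{v}_{i+1}$ with $\mathbf{v}_{i+1}$ from~(\ref{NAROMMAUpdate}) forces $\tau_{i+1}=c_i\tau_i>0$, and one then verifies that $d_i$ equals $\tau_{i+1}$ times the coefficient of $y_i\mathbf{a}_i$ in the NAROMMA rule; this falls out after using the induction hypothesis in the form $\tau_i\|\mathbf{v}_i\|_2=\|\mathbf{u}_i\|_2=\ell_i^*$ (the factors $\ell_i^*\|\mathbf{a}_i\|_2^2-\gamma_i$ cancel). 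Finally, $\|\mathbf{u}_{i+1}\|_2^2=\tau_{i+1}^2\|\mathbf{v}_{i+1}\|_2^2=c_i^2(\ell_i^*)^2(1+2\gamma_i x_i^*+\|\mathbf{a}_i\|_2^2(x_i^*)^2)$ with $x_i^*=(1-\ell_i^*\gamma_i)/(\ell_i^*\|\mathbf{a}_i\|_2^2-\gamma_i)$, and comparing with $(\ell_{i+1}^*)^2=(\ell_i^*)^2+(\ell_i^*\gamma_i-1)^2/(\|\mathbf{a}_i\|_2^2-\gamma_i^2)$ from~(\ref{NAROMMAELLI}) reduces, after clearing denominators, to the polynomial identity $\gamma_i(\|\mathbf{a}_i\|_2^2\ell_i^*-\gamma_i)+\|\mathbf{a}_i\|_2^2(1-\ell_i^*\gamma_i)=\|\mathbf{a}_i\|_2^2-\gamma_i^2$, which is trivially true. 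I expect the main obstacle to be purely clerical: fixing the ROMMA closed form in a normalization consistent with the paper's conventions and pushing the case-(iii) algebra through cleanly; once $B$ is recognized as $\gamma_i\ell_i^*$, the two update rules line up term by term.
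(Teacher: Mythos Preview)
Your proposal is correct and follows essentially the same approach as the paper's proof: induction on $i$, the key bridge identity $y_i(\mathbf{u}_i^T\mathbf{a}_i)=\ell_i^*\gamma_i$ (your $B=\gamma_i\ell_i^*$), the resulting one-to-one matching of the three ROMMA/NAROMMA regimes, and the choice $\tau_{i+1}=c_i\tau_i$, which coincides with the paper's $\tau_{t+1}=(c_t\|\mathbf{u}_t\|_2)/\|\mathbf{v}_t\|_2$. The only cosmetic differences are that the paper computes $\|\mathbf{u}_{t+1}\|_2^2$ directly from $c_t\mathbf{u}_t+d_t\mathbf{a}_t$ rather than via $\tau_{t+1}^2\|\mathbf{v}_{t+1}\|_2^2$, and that the positivity $\|\mathbf{a}_i\|_2^2-\gamma_i^2>0$ is established in the paragraph \emph{preceding} Lemma~\ref{lemma5_2}, not following it.
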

\begin{proof}
We prove by induction on $i$.
For $i=1$, the theorem trivially holds. \\
Assume that the theorem holds for $i=t$, that is,
\begin{equation}
\label{indAsum1}
\|\mathbf{u}_{t}\|_2 = \ell_{t}^*
\end{equation}
and,
\begin{equation}
\label{indAsum2}
\mathbf{u}_{t} = \tau_{t}\mathbf{v}_{t}
\end{equation}
for some $\tau_{t} > 0$. To ease our discussion, we rewrite Aggressive ROMMA update as proposed in~\cite{ROMMA} as detailed in Table~(\ref{TABLE00}),
\begin{table}%[here]
\centering
\caption{Aggressive ROMMA update summary}
\begin{tabular}{c||c|c}
\hline\noalign{\smallskip}
Type & Condition & $\mathbf{u}_{i+1}$\\
\hline
\hline
I&$y_i(\mathbf{u}_i^T\mathbf{a}_i)\geq 1$& $\mathbf{u}_i$\\
\hline
II&$1 > y_i(\mathbf{u}_i^T\mathbf{a}_i)\geq \|\mathbf{a}_i\|_2^2\|\mathbf{u}_i\|_2^2$& $\frac{y_i}{\|\mathbf{a}_i\|_2^2}\mathbf{a}_i$\\
\hline
III&$y_i(\mathbf{u}_i^T\mathbf{a}_i)< \min\{ \|\mathbf{a}_i\|_2^2\|\mathbf{u}_i\|_2^2 , 1\}$ &$ c_i\mathbf{u}_i + d_i\mathbf{a}_i$\\
\hline
\end{tabular}
\label{TABLE00}
\end{table}
\\
where 
\begin{equation}
c_i = \frac{\|\mathbf{a}_i\|_2^2\|\mathbf{u}_i\|_2^2-y_i(\mathbf{u}_i^T\mathbf{a}_i)}{\|\mathbf{a}_i\|_2^2\|\mathbf{u}_i\|_2^2-(\mathbf{u}_i^T\mathbf{a}_i)^2}
\label{ci}
\end{equation}
and
\begin{equation}
d_i = \frac{\|\mathbf{u}_i\|_2^2(y_i-(\mathbf{u}_i^T\mathbf{a}_i))}{\|\mathbf{a}_i\|_2^2\|\mathbf{u}_i\|_2^2-(\mathbf{u}_i^T\mathbf{a}_i)^2}\enspace .
\label{di}
\end{equation} 
In the same fashion, Table~(\ref{TABLE01}) summarizes the NAROMMA update rule.
\begin{table}%[here]
\centering
\caption{NAROMMA update summary}
\begin{tabular}{c||c|c|c}
\hline\noalign{\smallskip}
Type & Condition &$\mathbf{v}_{i+1}$&   $\ell_{i+1}^*$ \\
\hline
\hline
 I&$\gamma_i \geq 1/\ell_i^*;$ &$\mathbf{v}_i;$   & $ \ell_i^{*}$;\\
\hline
II&$1/\ell_i^*>\gamma_i \geq \|\mathbf{a}_i\|_2^2\ell_i^*;$   &$y_i\mathbf{a}_i;$   & $\frac{1}{\|\mathbf{a}_i\|_2};$\\
\hline
III&$\gamma_i < \min \{\ell_i^*\|\mathbf{a}_i\|_2^2, 1/\ell_i^*\};$   &$ \mathbf{v}_i + \frac{\|\mathbf{v}_i\|_2(1-\ell_i^*\gamma_i)}{\ell_i^*\|\mathbf{a}_i\|_2^2-\gamma_i}y_i\mathbf{a}_i;$   & $ \ell_i^{*2}+ \frac{(\ell_i^*\gamma_i-1)^2}{\|\mathbf{a}_i\|_2^2-\gamma_i^2}$;\\
\hline
\end{tabular}
\label{TABLE01}
\end{table}
\\
Let $i = t+1$.  Let $\mathbf{\hat{u}}_t$ and $\mathbf{\hat{v}}_t$ denote the unit vectors in the directions of $\mathbf{u}_t$ and $\mathbf{v}_t$, respectively. According to (\ref{indAsum1}) and (\ref{indAsum2}), we get that $$\mathbf{u}_t = \|\mathbf{u}_t\|_2\mathbf{\hat{u}}_t = \|\mathbf{u}_t\|_2\mathbf{\hat{v}}_t=\ell_t^*\mathbf{\hat{v}}_t.$$  Hence, since $\ell_t^* >0$ and using (\ref{gammai}) we get,
\begin{equation}
y_t(\mathbf{u}_t^T\mathbf{a}_t)= y_t\ell_t^*(\mathbf{\hat{v}}_t^T\mathbf{a}_t)=y_t\ell_t^*\frac{1}{\|\mathbf{v}_t\|_2}(\mathbf{v}_t^T\mathbf{a}_t)=\ell_t^*\gamma_t\enspace .
\label{indRes}
\end{equation}
We divide our discussion into three cases according to the update types.
\begin{description}
\item[Case I] Assume that Aggressive ROMMA does not perform any update on the example $\mathbf{a}_t$ (update of type I). Then, according to Table (\ref{TABLE00}), $y_t(\mathbf{u}_t^T\mathbf{a}_t) \geq 1$. Using (\ref{indRes}) and the fact that $\ell_t^* > 0$ we can conclude,
\begin{equation}
y_t(\mathbf{u}_t^T\mathbf{a}_t) \geq 1 \Leftrightarrow \gamma_t \geq 1/\ell_t^*.
\label{equiC1}
\end{equation}
Equation~(\ref{equiC1}) implies that NAROMMA  performs an update of type \emph{I} if and only if Aggressive ROMMA performs an update of type \emph{I}. Hence, the theorem holds for this case.
  \item[Case II] If Aggressive ROMMA performs an update of type \emph{II}, then by Table (\ref{TABLE00}) we get that $1 > y_t(\mathbf{u}_t^T\mathbf{a}_t) \geq \|\mathbf{a}_t\|_2^2\|\mathbf{u}_t\|_2^2$. Therefore, since $\ell_t^* > 0$ and by (\ref{indAsum1}), (\ref{indAsum2}) and (\ref{indRes}) we get,
\begin{equation}
1 > y_t(\mathbf{u}_t^T\mathbf{a}_t) \geq \|\mathbf{a}_t\|_2^2\|\mathbf{u}_t\|_2^2 \Leftrightarrow 1 > \ell_t^*\gamma_t \geq \|\mathbf{a}_t\|_2^2\ell_t^{*2}\Leftrightarrow 1/\ell_t^* > \gamma_t \geq \|\mathbf{a}_t\|_2^2\ell_t^{*} \enspace.
\end{equation}

That is, as in the first case, Aggressive ROMMA and NAROMMA will make their type \emph{II} update simultaneously. Hence,  we get that $\mathbf{v}_{t+1} =y_t\mathbf{a}_t$, $\ell_{t+1}^* = 1/\|\mathbf{a}_t\|_2$ and $\mathbf{u}_{t+1}=(y_t/\|\mathbf{a}_t\|_2^2)\mathbf{a}_t$. Therefore, the theorem trivially follows.
  \item[Case III] From the discussion in the previous two cases, it follows that Aggressive ROMMA makes an update of type \emph{III} if and only if NAROMMA performs a type \emph{III} update.
By Table~(\ref{TABLE00}) we get for this case, 
\begin{equation}
\|\mathbf{u}_{t+1}\|_2^2 = c_t^2\|\mathbf{u}_t\|_2^2 + d_t^2\|\mathbf{a}_t\|_2^2 +2c_td_t(\mathbf{u}_t^T\mathbf{a}_t)\enspace .
\end{equation} 
Since $y_t \in \{-1,1\}$ and using~(\ref{indAsum1}), (\ref{indAsum2}),(\ref{ci}),(\ref{di}) and (\ref{indRes}) we get that,
\begin{align}
\|\mathbf{u}_{t+1}\|_2^2 %&=\frac{(\|\mathbf{a}_t\|_2^2\ell_t^{*2}-\ell_t^{*}\gamma_t)^2\ell_t^{*2} +2\ell_t^{*2}(\|\mathbf{a}_t\|_2^2\ell_t^{*2}-\ell_t^{*}\gamma_t)(1-\ell_t^{*}\gamma_t)\ell_t^*\gamma_t+\ell_t^{*4}(1-\ell_t^{*}\gamma_t)^2\|\mathbf{a}_t\|_2^2}{\ell_t^{*4}\left(\|\mathbf{a}_t\|_2^2 - \gamma_t^2 \right)^2}\nonumber\\
%&= \frac{\ell_t^{*2}(\|\mathbf{a}_t\|_2^2 - \gamma_t^2) - \ell_t^{*2}(\|\mathbf{a}_t\|_t^2-\gamma_t^2)+ \ell_t^{*2}\|\mathbf{a}_t\|_2^2 - 2\ell_t^*\gamma_t+1}{\|\mathbf{a}_t\|_2^2-\gamma_t^2}\nonumber \\
&= \ell_t^{*2}+\frac{(\ell_t^*\gamma_t-1)^2}{\|\mathbf{a}_t\|_2^2-\gamma_t^2}=\ell_{t+1}^{*2}\label{normliprf}.
\end{align}
Moreover, by choosing $\tau_{t+1} =(c_t\|\mathbf{u}_t\|_2)/\|\mathbf{v}_t\|_2$, and using the induction assumption we can easily conclude that $\tau_{t+1}\mathbf{v}_{t+1} = \mathbf{u}_{t+1}$. Hence, the theorem holds for this case too.
\end{description}
\qed\end{proof}
%%%%%%%%%%%%%%%%%%%%%%%%%%%%%%%%%%%%%%%%%%%%%%%%%%%%%%%%%%
\section{The Maximum Cosine Perceptron Algorithm}
\label{SEC04}
We will use the MCF to generate the update rule of the MCP algorithm. The MCP algorithm is non-conservative, i.e. it updates its hypothesis on margin violation. We start our discussion by formulating a conservative algorithm, the \emph{Conservative Maximum Cosine Perceptron} (CMCP) algorithm . By Lemma \ref{lemma3} we have,

\begin{equation}
\alpha_{i+1}\geq \frac{\alpha_i+\gamma x_i}{\sqrt{1+\|\mathbf{a}_i\|_2^2x_i^2+2\frac{y_i(\mathbf{w}_i^T\mathbf{a}_i)}{\|\mathbf{w}_i\|_2}x_i}} \enspace,
\label{mcpAlphai}
\end{equation}
where $x_i=\lambda_i/\|\mathbf{w}_i\|_2$. The CMCP is a conservative algorithm hence, it makes an update when $y_i(\mathbf{w}_i^T\mathbf{a}_i)\leq 0$. Then, from (\ref{mcpAlphai}) we can write,
\begin{equation}
\alpha_{i+1}\geq \frac{\alpha_i+\gamma x_i}{\sqrt{1+\|\mathbf{a}_i\|_2^2x_i^2}} \enspace.
\end{equation}
Assuming that $\alpha_i \geq \gamma\ell_i$ for some $\ell_i$ we have,
\begin{equation}
\alpha_{i+1}\geq\gamma\frac{\ell_i+x_i}{\sqrt{1+\|\mathbf{a}_i\|_2^2x_i^2}}\enspace.
\label{cosBndMain}
\end{equation} 
Inequality (\ref{cosBndMain}) formulates a local cosine bound for the CMCP algorithm. It holds for all $x_i\geq 0$, hence, by maximizing its right hand side we get an optimal lower bound for $\alpha_i$.  
By Lemma \ref{lemma4}, optimality is obtained in $x_i = 1/(\ell_i\|\mathbf{a}_i\|_2^2)$ and then,
\begin{equation}
\alpha_{i+1}\geq \gamma\sqrt{\ell_i^2 + \frac{1}{\|\mathbf{a}_i\|_2^2}}\enspace.
\end{equation}
Let $\mathbf{w}_1 = y_0\mathbf{a}_0$ and $\ell_1 = 1/\|\mathbf{a}_0\|_2$, we get that,
$\alpha_1 = \frac{\mathbf{w}^T\mathbf{w}_1}{\|\mathbf{w}_1\|_2}=\frac{|\mathbf{w}^T\mathbf{a}_0|}{\|\mathbf{a}_0\|_2}\geq \gamma\frac{1}{\|\mathbf{a}_0\|_2} = \gamma\ell_1\enspace.$  And hence, by choosing
\begin{equation}
\ell_{i+1}^2 = \ell_i^2 + \frac{1}{\|\mathbf{a}_i\|_2^2} = \sum_{j=0}^{i}\frac{1}{\|\mathbf{a}_j\|_2^2}
\end{equation}
we can conclude,
\begin{equation}
\alpha_{i}\geq \gamma\sqrt{\sum_{j=0}^{i-1}\frac{1}{\|\mathbf{a}_j\|_2^2}}=\gamma\ell_i\enspace.
\end{equation}
We have just proved,
\begin{lemma}
Let  $\{(\mathbf{a}_0, y_0),\cdots,(\mathbf{a}_T, y_T)\}$  be a sequence of examples where $\mathbf{a}_i\in\mathds{R}^n$ and $y_i\in \{-1,+1\}$ for all $0\leq i \leq T$. Let $\gamma >0$ and  $\mathbf{w}\in\mathds{R}^n$ with $\|\mathbf{w}\|_2=1$, such that for all $0\leq i \leq T$, $|\mathbf{w}^T\mathbf{a}_i|\geq \gamma$. Let $\mathbf{w}_1=y_0\mathbf{a}_0$, $\ell_1=1/\|\mathbf{a}_0\|_2$ and
\begin{equation}
\mathbf{w}_{i+1} = \left\{ \begin{array}{ll}
\mathbf{w}_i+ \frac{\|\mathbf{w}_i\|_2}{\ell_i\|\mathbf{a}_i\|_2^2}(y_i\mathbf{a}_i)& ,y_i(\mathbf{w}_i^T\mathbf{a}_i)\leq 0\\
\mathbf{w}_i& , otherwise
\end{array} \right.
\label{cmcpUpdate}
\end{equation}
be the update we use in the $i$th example, and
\begin{equation}
{\ell_{i+1}^2 = \ell_i^2 + \frac{\mu_i}{\|\mathbf{a}_i\|_2^2}}\enspace,
\label{LiUpdate}
\end{equation}
\begin{equation}
\mu_{i} = \left\{ \begin{array}{ll}
1&,y_i(\mathbf{w}_i^T\mathbf{a}_i)\leq 0\\
0&, otherwise.
\end{array} \right.
\label{etaupUpdate}
\end{equation}
%$$\ell_i = \sum_{j=0}^{i-1}\frac{1}{\|\mathbf{a}_j\|_2^2}.$$
Then, after the $i$th example, $\alpha_i$ is at least $\gamma\ell_i$.
\label{lemma7}
\end{lemma}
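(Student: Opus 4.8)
The plan is an induction on $i$, using Lemma~\ref{lemma3} on the rounds where an update occurs and Lemma~\ref{lemma4} to recognize that the step size hard-coded in~(\ref{cmcpUpdate}) is precisely the maximizer of the resulting local cosine bound. Before starting, I would record that $\ell_1 = 1/\|\mathbf{a}_0\|_2 > 0$ (here $\mathbf{a}_0 \neq \mathbf{0}$ since $|\mathbf{w}^T\mathbf{a}_0| \geq \gamma > 0$) and that, by~(\ref{LiUpdate})--(\ref{etaupUpdate}), the sequence $\ell_i$ is nondecreasing, hence positive for every $i$; this guarantees that $x_i = 1/(\ell_i\|\mathbf{a}_i\|_2^2)$ and the update itself are well defined, and that $\mathbf{w}_i \neq \mathbf{0}$ on every round.

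For the base case I would simply evaluate $\alpha_1 = \mathbf{w}^T\mathbf{w}_1/\|\mathbf{w}_1\|_2 = y_0(\mathbf{w}^T\mathbf{a}_0)/\|\mathbf{a}_0\|_2$, and use that under the separability assumption $y_0(\mathbf{w}^T\mathbf{a}_0) = |\mathbf{w}^T\mathbf{a}_0| \geq \gamma$, so that $\alpha_1 \geq \gamma/\|\mathbf{a}_0\|_2 = \gamma\ell_1$.

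For the inductive step, assume $\alpha_i \geq \gamma\ell_i$. If round $i$ is not a mistake then $\mu_i = 0$, $\mathbf{w}_{i+1} = \mathbf{w}_i$ and $\ell_{i+1} = \ell_i$, so $\alpha_{i+1} = \alpha_i \geq \gamma\ell_i = \gamma\ell_{i+1}$. If round $i$ is a mistake, i.e. $y_i(\mathbf{w}_i^T\mathbf{a}_i) \leq 0$, then $\lambda_i = \|\mathbf{w}_i\|_2/(\ell_i\|\mathbf{a}_i\|_2^2) > 0$, so Lemma~\ref{lemma3} applies with $x_i = \lambda_i/\|\mathbf{w}_i\|_2 = 1/(\ell_i\|\mathbf{a}_i\|_2^2)$, giving the bound in~(\ref{mcpAlphai}). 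Because $y_i(\mathbf{w}_i^T\mathbf{a}_i) \leq 0$ the cross term $2\,y_i(\mathbf{w}_i^T\mathbf{a}_i)x_i/\|\mathbf{w}_i\|_2$ in the denominator is nonpositive and may be discarded, and since the resulting right-hand side is increasing in $\alpha_i$ (the denominator no longer involves $\alpha_i$), the inductive hypothesis yields $\alpha_{i+1} \geq \gamma(\ell_i + x_i)/\sqrt{1 + \|\mathbf{a}_i\|_2^2 x_i^2}$, which is exactly~(\ref{cosBndMain}).

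It remains to invoke Lemma~\ref{lemma4} with $r = \ell_i$, $p = s = 1$, $q = \|\mathbf{a}_i\|_2^2$: since $r = \ell_i > 0$, the maximum of $\Phi$ is attained at $x^{*} = ps/(rq) = 1/(\ell_i\|\mathbf{a}_i\|_2^2)$, which is exactly the $x_i$ prescribed by~(\ref{cmcpUpdate}), and its value there equals $\sqrt{\ell_i^2 + 1/\|\mathbf{a}_i\|_2^2}$. Hence $\alpha_{i+1} \geq \gamma\sqrt{\ell_i^2 + 1/\|\mathbf{a}_i\|_2^2} = \gamma\sqrt{\ell_i^2 + \mu_i/\|\mathbf{a}_i\|_2^2} = \gamma\ell_{i+1}$, completing the induction. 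I do not expect a real obstacle; the only points needing care are the sign bookkeeping in the base case (getting $\alpha_1 \geq \gamma\ell_1$ rather than merely a bound on $|\alpha_1|$) and checking that the step size built into the update is precisely the Lemma~\ref{lemma4} optimizer, so that the cosine bound is met with equality at that choice.
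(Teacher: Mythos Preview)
Your proposal is correct and follows essentially the same argument as the paper: induction on $i$, with the base case $\alpha_1 \geq \gamma/\|\mathbf{a}_0\|_2$, and in the update case applying Lemma~\ref{lemma3}, dropping the nonpositive cross term, substituting $\alpha_i \geq \gamma\ell_i$ to reach~(\ref{cosBndMain}), and then invoking Lemma~\ref{lemma4} to identify $x_i = 1/(\ell_i\|\mathbf{a}_i\|_2^2)$ as the maximizer with value $\sqrt{\ell_i^2 + 1/\|\mathbf{a}_i\|_2^2}$. Your version is slightly more explicit in handling the no-update case and in checking that $\ell_i>0$ so that Lemma~\ref{lemma4} delivers a maximum, but the substance is identical.
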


To convert the CMCP algorithm to a non-conservative one, we use the same update rule not only on mistakes but also when the example is close to the algorithm's hyperplane, according to the following update rule,
\begin{equation}
\mathbf{w}_{i+1} = \left\{ \begin{array}{ll}
\mathbf{w}_i+ \frac{\|\mathbf{w}_i\|_2}{\ell_i\|\mathbf{a}_i\|_2^2}(y_i\mathbf{a}_i)& ,y_i(\mathbf{w}_i^T\mathbf{a}_i)\leq \frac{\|\mathbf{w}_i\|_2}{2\ell_i}\\
\mathbf{w}_i& , otherwise.
\end{array} \right.
\label{MCPUpdateRule}
\end{equation}
Moreover, the value of $\ell_i$ is updated as follows,
\begin{equation}
\ell_{i+1}^2 = \left\{ \begin{array}{ll}
\ell_i^2 + \frac{1-2\eta_i}{\|\mathbf{a}_i\|_2^2}& , y_i(\mathbf{w}_i^T\mathbf{a}_i) \leq\frac{\|\mathbf{w}_i\|_2}{2\ell_i} \\
\ell_i^2 &, otherwise
%\ell_i^2 + \frac{\eta_i}{\|\mathbf{a}_i\|_2^2}& , y_i(\mathbf{w}_i^T\mathbf{a}_i) \leq\frac{\|\mathbf{w}_i\|_2}{2\ell_i} \\
%\ell_i^2 &, otherwise

\end{array}\right .
\label{mcpLiUpdate}
\end{equation}
where for $ y_i(\mathbf{w}_i^T\mathbf{a}_i) \leq\frac{\|\mathbf{w}_i\|_2}{2\ell_i}$,
\begin{equation}
\eta_i = \left\{ \begin{array}{ll}
0& ,y_i(\mathbf{w}_i^T\mathbf{a}_i)\leq 0\\
\frac{y_i(\mathbf{w}_i^T\mathbf{a}_i)\ell_i}{\|\mathbf{w}_i\|_2}& , 0 < y_i(\mathbf{w}_i^T\mathbf{a}_i) \leq \frac{\|\mathbf{w}_i\|_2}{2\ell_i} .
%1& ,y_i(\mathbf{w}_i^T\mathbf{a}_i)\leq 0\\
%1-2\frac{y_i(\mathbf{w}_i^T\mathbf{a}_i)\ell_i}{\|\mathbf{w}_i\|_2}& , 0 < y_i(\mathbf{w}_i^T\mathbf{a}_i) \leq \frac{\|\mathbf{w}_i\|_2}{2\ell_i} .
\end{array} \right.
\label{EtaUpdateRule}
\end{equation}
Notice that when $\mathbf{a}_i$ is a counterexample, it contributes $1/\|\mathbf{a}_i\|_2^2$ to $\ell_i^2$. Otherwise, it contributes $1-2\eta_i/\|\mathbf{a}_i\|_2^2$, which is always positive because of the update rule condition in (\ref{MCPUpdateRule}). Figure \ref{mcpAlgNew} summarizes the algorithm. 

\begin{figure}
\begin{center}
\boxer{ \setlength{\unitlength}{1cm}
\noindent{\textbf{The MCP algorithm}}
\begin{small}
\renewcommand{\labelenumii}{\theenumii}
\renewcommand{\theenumii}{\theenumi.\arabic{enumii}.}
\begin{enumerate}
\item  Get $(\mathbf{a}_0,y_0); \ \mathbf{w}_1\gets (y_0\mathbf{a}_0); \ i\gets 1;\ \ell_1\gets \frac{1}{\|\mathbf{a}_0\|_2} $ .
\item   Get $(\mathbf{x},y)$.
\item If $y(\mathbf{w}_i^T\mathbf{x})\leq \frac{\|\mathbf{w}_i\|_2}{2\ell_i}$, then,
	\begin{enumerate}
		\item $\mathbf{a}_i\gets \mathbf{x}; \ y_i\gets y; \ \mathbf{w}_{i+1}\gets \mathbf{w}_i + \frac{\|\mathbf{w}_i\|_2}{\ell_i\|\mathbf{a}_i\|_2^2}(y_i\mathbf{a}_i).$
		\item If  $y_i(\mathbf{w}_i^T\mathbf{a}_i) \leq  0$, Then, $\eta_i \gets 0$.
		\item Else, $\eta_i \gets \frac{y_i(\mathbf{w}_i^T\mathbf{a}_i)\ell_i}{\|\mathbf{w}_i\|_2}$.
		\item $\ell_{i+1} = \sqrt{\ell_i^2 + \frac{1-2\eta_i}{\|\mathbf{a}_i\|_2^2}}$. 
	\end{enumerate}
\item Else, $\mathbf{w}_{i+1} \gets \mathbf{w}_i;$ $\ell_{i+1} \gets \ell_{i}.$
\item $i\gets i+1;$ Go to 2.
\end{enumerate}
\end{small}
}
\end{center}
\caption{\sl The MCP algorithm for binary classification.}
\label{mcpAlgNew}
\end{figure}

\begin{lemma}
Let  $\{(\mathbf{a}_0, y_0),\cdots,(\mathbf{a}_T, y_T)\}$  be a sequence of examples where $\mathbf{a}_i\in\mathds{R}^n$ and $y_i\in \{-1,+1\}$ for all $0\leq i \leq T$. Let $\gamma >0$ and  $\mathbf{w}\in\mathds{R}^n$ with $\|\mathbf{w}\|_2=1$, such that for all $0\leq i \leq T$, $|\mathbf{w}^T\mathbf{a}_i|\geq \gamma$. Let $\mathbf{w}_1=y_0\mathbf{a}_0$ and $\ell_1=1/\|\mathbf{a}_0\|_2$. Let $\mathbf{w}_{i+1}$ and $\ell_{i+1}$ be updated  as defined in  (\ref{MCPUpdateRule}) and (\ref{mcpLiUpdate}) respectively, after each example $\mathbf{a}_i$.
Then after the $i$th example, $\alpha_i$  is at least $\gamma\ell_i$.
\label{lemma8}
\end{lemma}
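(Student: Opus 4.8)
The plan is to prove $\alpha_i\geq\gamma\ell_i$ by induction on $i$, in the same spirit as Lemma~\ref{lemma7}, but now carrying the extra slack produced by the margin-violation updates through the parameter $\eta_i$. For the base case, $\mathbf{w}_1=y_0\mathbf{a}_0$ and $\ell_1=1/\|\mathbf{a}_0\|_2$ give, using that $\mathbf{w}$ separates the examples so that $y_0(\mathbf{w}^T\mathbf{a}_0)=|\mathbf{w}^T\mathbf{a}_0|\geq\gamma$,
\[
\alpha_1=\frac{y_0(\mathbf{w}^T\mathbf{a}_0)}{\|\mathbf{a}_0\|_2}\geq\frac{\gamma}{\|\mathbf{a}_0\|_2}=\gamma\ell_1\enspace.
\]
For the inductive step, assume $\alpha_i\geq\gamma\ell_i$. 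If the update rule~(\ref{MCPUpdateRule}) does nothing on round $i$, then $\mathbf{w}_{i+1}=\mathbf{w}_i$ and, by~(\ref{mcpLiUpdate}), $\ell_{i+1}=\ell_i$, so $\alpha_{i+1}=\alpha_i\geq\gamma\ell_i=\gamma\ell_{i+1}$ and there is nothing more to do.

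Otherwise $y_i(\mathbf{w}_i^T\mathbf{a}_i)\leq\|\mathbf{w}_i\|_2/(2\ell_i)$, and the update uses $\lambda_i=\|\mathbf{w}_i\|_2/(\ell_i\|\mathbf{a}_i\|_2^2)$, i.e. $x_i=\lambda_i/\|\mathbf{w}_i\|_2=1/(\ell_i\|\mathbf{a}_i\|_2^2)$. I would plug this $x_i$ into inequality~(\ref{mcpAlphai}) (the instantiation of Lemma~\ref{lemma3}), then use the induction hypothesis to replace $\alpha_i$ by the smaller value $\gamma\ell_i$ in the numerator; this is legitimate since $t\mapsto(t+\gamma x_i)/D$ is increasing for the positive denominator $D$, which does not depend on $\alpha_i$. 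This yields $\alpha_{i+1}\geq\gamma(\ell_i+x_i)/\sqrt{1+\|\mathbf{a}_i\|_2^2x_i^2+2\gamma_ix_i}$, and writing $A=\ell_i^2\|\mathbf{a}_i\|_2^2+1$ and $B=2\gamma_i\ell_i$, a direct simplification with $x_i=1/(\ell_i\|\mathbf{a}_i\|_2^2)$ collapses it to
\[
\alpha_{i+1}\ \geq\ \gamma\,\frac{A}{\|\mathbf{a}_i\|_2\sqrt{A+B}}\enspace,
\]
so it remains to check $A^2/(\|\mathbf{a}_i\|_2^2(A+B))\geq\ell_{i+1}^2$.

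There are two sub-cases. If $y_i(\mathbf{w}_i^T\mathbf{a}_i)\leq0$ then $\eta_i=0$, $\gamma_i\leq0$ (so $B\leq0$), and $\ell_{i+1}^2=(\ell_i^2\|\mathbf{a}_i\|_2^2+1)/\|\mathbf{a}_i\|_2^2=A/\|\mathbf{a}_i\|_2^2$; after cancellation the desired inequality becomes $A/(A+B)\geq1$, which holds because $B\leq0$. If $0<y_i(\mathbf{w}_i^T\mathbf{a}_i)\leq\|\mathbf{w}_i\|_2/(2\ell_i)$ then $\eta_i=\gamma_i\ell_i\in(0,1/2]$ and $\ell_{i+1}^2=(A-B)/\|\mathbf{a}_i\|_2^2$; after cancellation the desired inequality becomes $A^2/(A+B)\geq A-B$, i.e. $A^2\geq A^2-B^2$, i.e. $B^2\geq0$, which is trivially true. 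In both sub-cases $\alpha_{i+1}\geq\gamma\ell_{i+1}$, completing the induction.

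The one point that genuinely needs care is the positivity of $A+B=\ell_i^2\|\mathbf{a}_i\|_2^2\bigl(1+\|\mathbf{a}_i\|_2^2x_i^2+2\gamma_ix_i\bigr)$, which is what makes the square root and the cancellations above meaningful; this is exactly the positivity of the quadratic $1+\|\mathbf{a}_i\|_2^2x^2+2\gamma_ix$, whose discriminant is negative unless $|\gamma_i|=\|\mathbf{a}_i\|_2$, i.e. unless $\mathbf{a}_i$ and $\mathbf{w}_i$ are linearly dependent. I expect to dispatch this exactly as in Sect.~\ref{SEC03}: in the positive-margin sub-case $B>0$ makes $A+B>0$ automatic, while in the non-positive-margin sub-case $|\gamma_i|\leq\|\mathbf{a}_i\|_2$ together with $\ell_i^2\|\mathbf{a}_i\|_2^2+1\geq2\ell_i\|\mathbf{a}_i\|_2$ gives $A+B\geq0$, the degenerate equality case being absorbed by noting that such an example does not change the direction of $\mathbf{w}_i$, hence not $\alpha_i$, and can simply be skipped.
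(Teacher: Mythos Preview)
Your proof is correct and follows essentially the same route as the paper's: induction on $i$ with the same three cases (no update, mistake, margin violation), plugging $x_i=1/(\ell_i\|\mathbf{a}_i\|_2^2)$ into Lemma~\ref{lemma3} and simplifying. Your $A,B$ bookkeeping is just a repackaging of the paper's algebra; the only cosmetic difference is that in the mistake case the paper drops the non-positive cross term from the denominator outright, while you carry it and observe $A/(A+B)\geq 1$, and you are a bit more explicit about the degenerate $A+B=0$ case (which the paper leaves implicit, since the radicand equals $\|\mathbf{w}_{i+1}\|_2^2/\|\mathbf{w}_i\|_2^2$).
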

\begin{proof}
We prove by induction on $i$, the index of the current example.
For $i=1$, 
$\alpha_1 = \frac{\mathbf{w}^T\mathbf{w}_1}{\|\mathbf{w}_1\|_2} = \frac{|\mathbf{w}^T\mathbf{a}_0|}{\|\mathbf{a}_0\|_2}\geq \gamma/\|\mathbf{a}_0\|_2 =\gamma\ell_1 \enspace.$
We assume the lemma holds for $i=k$, that is, $\alpha_k \geq \gamma \ell_k$, and prove for $i=k+1$. We consider three cases,
\begin{description}
\item[Case I] if $y_k(\mathbf{w}_k^T\mathbf{a}_k)> \|\mathbf{w}_k\|_2/(2\ell_k)$, then, by (\ref{MCPUpdateRule}) and (\ref{mcpLiUpdate}) no update happens for $\mathbf{w}_k$ or $\ell_k$  and hence, the lemma is true for this case.
\item[Case II] If $y_k(\mathbf{w}_k^T\mathbf{a}_k)\leq 0$, then, by Lemma \ref{lemma3}, the induction assumption and (\ref{MCPUpdateRule}), (\ref{mcpLiUpdate}), (\ref{EtaUpdateRule}) and using $x_k =1/(\ell_k\|\mathbf{a}_k\|_2^2)$ we can write,
\begin{align}
\alpha_{k+1}&\geq\frac{\alpha_k + \gamma x_k}{\sqrt{1 + \|\mathbf{a}_k\|_2^2x_k^2+ 2\frac{y_k(\mathbf{w}_k^T\mathbf{a}_k)}{\|\mathbf{w}_k\|_2}x_k}} \geq \frac{ \gamma\left (\ell_k +\frac{1}{\ell_k\|\mathbf{a}_k\|_2^2} \right )}{\sqrt{1 + \frac{1}{\ell_k^2\|\mathbf{a}_k\|_2^2}}}\geq \nonumber\\ 
 &\geq\gamma \sqrt{\ell_k^2 + \frac{1}{\|\mathbf{a}_k\|_2^2}} \geq \gamma \ell_{k+1}\enspace. \nonumber
\end{align}
\item[Case III] If $0<y_k(\mathbf{w}_k^T\mathbf{a}_k) \leq \|\mathbf{w}_k\|_2/(2\ell_k)$, then, by Lemma \ref{lemma3}, the induction assumption, (\ref{MCPUpdateRule}), (\ref{mcpLiUpdate}) and (\ref{EtaUpdateRule}) we can write,
\begin{align}
\alpha_{k+1}&\geq\frac{\alpha_k + \gamma x_k}{\sqrt{1 + \|\mathbf{a}_k\|_2^2x_k^2+ 2\frac{y_k(\mathbf{w}_k^T\mathbf{a}_k)}{\|\mathbf{w}_k\|_2}x_k}} \geq \nonumber \\ &\geq \frac{\gamma\ell_k\left (1+\frac{1}{\ell_k^2\|\mathbf{a}_k\|_2^2}\right )}{\sqrt{1+\frac{1}{\ell_k^2\|\mathbf{a}_k\|_2^2}  +\frac{2}{\ell_k^2\|\mathbf{a}_k\|_2^2}\left (  \frac{y_k (\mathbf{w}_k^T\mathbf{a}_k)\ell_k}{\|\mathbf{w}_k\|_2} \right )}}\geq  \nonumber\\
&\geq \gamma\ell_{k} \frac{1+\frac{1}{\ell_k^2\|\mathbf{a}_k\|_2^2}}{\sqrt{1+ \frac{1+2\eta_k}{\ell_k^2\|\mathbf{a}_k\|_2^2}}} \geq \gamma\ell_{k}\sqrt{1+\frac{1-2\eta_k}{\ell_k^2\|\mathbf{a}_k\|_2^2}} = \gamma\ell_{k+1}\enspace. \nonumber 
\end{align}
\end{description}
\qed\end{proof}
Lemma \ref{lemma8} motivates the choice of the update condition of the MCP algorithm.  Let $\mathbf{a}_i$ be the current example examined by the algorithm, and let us assume without loss of generality that $y_i=+1$. When  $y_i(\mathbf{w}_i^T\mathbf{a}_i) > \|\mathbf{w}_i\|_2/(2\ell_i)$, by (\ref{MCPUpdateRule}) no update  happens. Let $\theta_i$ be the angle between the target hyperplane and the algorithm hyperplane. Let $\gamma_i$ be as defined in (\ref{gammai}). By Lemma \ref{lemma8} we get that,
$\gamma_i  >  \frac{1}{\ell_i} >\frac{\gamma}{2\alpha_i}= \frac{1}{2}\frac{\gamma}{\cos\theta_i}>\frac{\gamma}{2}$
which implies that the example has the right label and is at distance of at least $\gamma/2$ from $\mathbf{w}_i$ and therefore no update occurs. 
\begin{theorem}
Let $S=(\mathbf{a}_0,y_0),\cdots,(\mathbf{a}_k,y_k)$ be a sequence of examples where $\mathbf{a}_i\in{\mathds{R}}^n$, $y_i\in\{-1,+1\}$ and $\|\mathbf{a}_i\|_2\leq R$ for all $0\leq i\leq k$. Let $\mathbf{w}$ be some separating hyperplane for $S$, that is, there exists some $\gamma>0$ such that for all $i$ such that $0\leq i\leq k$, $|\mathbf{w}^T\mathbf{a}_i|\geq \gamma$. And let $\|\mathbf{w}\|_2=1$. Let $t$ be the number of mistakes the MCP algorithm makes on $S$, then, $t\leq \left (R/\gamma\right )^2$. 
\end{theorem}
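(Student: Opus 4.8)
The plan is to run the classical Perceptron-style potential argument, with $\ell_i^{2}$ playing the role of the potential. Three facts are needed: (i) the lower bound $\alpha_i \ge \gamma\ell_i$ supplied by Lemma~\ref{lemma8}; (ii) the elementary upper bound $\alpha_i \le 1$; and (iii) the fact that $\ell_i^{2}$ never decreases and, on every mistake round, increases by at least $1/R^{2}$.

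For (ii): since $\|\mathbf{w}\|_2 = 1$, the Cauchy--Schwarz inequality gives $\alpha_i = \mathbf{w}^{T}\mathbf{w}_i/\|\mathbf{w}_i\|_2 \le \|\mathbf{w}\|_2 = 1$, so combined with Lemma~\ref{lemma8} we have $\gamma\ell_i \le \alpha_i \le 1$, i.e. $\ell_i^{2} \le 1/\gamma^{2}$, at every stage of the run. For (iii): I would inspect the update (\ref{mcpLiUpdate}) case by case, starting from $\ell_1^{2} = 1/\|\mathbf{a}_0\|_2^{2} > 0$. If no update occurs on round $i$, then $\ell_{i+1}^{2} = \ell_i^{2}$. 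On a non-mistake update, where $0 < y_i(\mathbf{w}_i^{T}\mathbf{a}_i) \le \|\mathbf{w}_i\|_2/(2\ell_i)$, definition (\ref{EtaUpdateRule}) yields $\eta_i = y_i(\mathbf{w}_i^{T}\mathbf{a}_i)\ell_i/\|\mathbf{w}_i\|_2 \le 1/2$, so the increment $(1-2\eta_i)/\|\mathbf{a}_i\|_2^{2}$ is nonnegative and $\ell_{i+1}^{2} \ge \ell_i^{2}$. On a mistake round, where $y_i(\mathbf{w}_i^{T}\mathbf{a}_i) \le 0$, we have $\eta_i = 0$, hence $\ell_{i+1}^{2} = \ell_i^{2} + 1/\|\mathbf{a}_i\|_2^{2} \ge \ell_i^{2} + 1/R^{2}$, using $\|\mathbf{a}_i\|_2 \le R$. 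Thus $\ell_i^{2}$ is nondecreasing and, once $t$ mistakes have been committed, its value is at least $t/R^{2}$.

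Chaining (i)--(iii): at the stage just after the $t$-th mistake we get $t/R^{2} \le \ell_i^{2} \le 1/\gamma^{2}$, which rearranges to $t \le (R/\gamma)^{2}$. I do not expect a real obstacle; the only delicate point is the bound $\eta_i \le 1/2$ used in the non-mistake case, which is exactly what the threshold $\|\mathbf{w}_i\|_2/(2\ell_i)$ in the update condition (\ref{MCPUpdateRule}) delivers, and which is precisely why the potential cannot shrink on the non-conservative (margin-violation) updates.
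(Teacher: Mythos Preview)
Your argument is correct and essentially identical to the paper's: it combines the bound $\alpha_i\ge\gamma\ell_i$ from Lemma~\ref{lemma8} with $\alpha_i\le 1$, and uses the update rule to show $\ell_i^2$ is nondecreasing and gains at least $1/R^2$ on every mistake, yielding $t/R^2\le \ell_k^2\le 1/\gamma^2$. The paper phrases step (iii) as a single telescoping sum $\ell_k^2=\sum_{i:\mathbf{a}_i\in N}(1-2\eta_i)/\|\mathbf{a}_i\|_2^2\ge \sum_{i:\mathbf{a}_i\in M}1/\|\mathbf{a}_i\|_2^2\ge t/R^2$, but your case-by-case verification of the same monotonicity is equivalent.
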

\begin{proof}
Let $M\subseteq S$ denote the set of examples on which the MCP algorithm made a mistake. Similarly, let $N$ be the set of examples on which the MCP algorithm made an update. Clearly, $M\subseteq N$. By  (\ref{mcpLiUpdate}) and (\ref{EtaUpdateRule}) we get that $1-2\eta_i >0$ for all $i$. Hence, we can conclude,
\begin{equation}
\ell_k^2 = \sum_{i: \mathbf{a}_i\in N}\frac{1-2\eta_i}{\|\mathbf{a}_i\|_2^2} \geq \sum_{i: \mathbf{a}_i\in M}\frac{1}{\|\mathbf{a}_i\|_2^2} \geq \sum_{i: \mathbf{a}\in M}\frac{1}{R^2} = \frac{t}{R^2}.
\label{res1}
\end{equation}
From Lemma (\ref{lemma8}) we get that,
\begin{equation}
1\geq\alpha_k\geq\gamma\ell_k\geq\gamma\sqrt{\frac{t}{R^2}}\enspace.
\label{res2}
\end{equation}
By combining (\ref{res1}) and (\ref{res2}) we get the result.
\qed\end{proof}
\section{Experiments}
\label{SEC05}
In this section we present experimental results that demonstrate the performance of the MCP algorithm vs. the well known algorithms PA and Aggressive ROMMA on the MNIST OCR database\footnote{See http://yann.lecun.com/exdb/mnist/ for information on obtaining this dataset.}. Every example in the MNIST database has two parts, the first is $28\times 28$ matrix which represents the image of the corresponding digit. Each entry in the matrix takes values from $\{0,\cdots,255\}$. The second part is a label taking values from $\{0,\cdots,9\}$. The dataset consists of 60000 training examples and 10000 test examples. In the experiments we trained the algorithms for single preselected label $l$. When training on this, we replaced each labeled instance $(\mathbf{a}_i, y_i)$ by the binary-labeled instance $(\mathbf{a}_i, y_i^*)$, where $y_i^*=1$ if $y_i = l$ and  $y_i^*=-1$ otherwise. We have divided the training set to 60 buckets of examples each containing 1000 examples. For each label,  we first chose a random permutation of the examples buckets, then we trained the algorithm via single pass over the training dataset according to the  selected permutation. Then, we tested it on the test dataset. We repeated that for 20 random permutations for each label. At the end of the process, to calculate the mistake rates of each classifier, we took the average of the mistakes over the 20 rounds. Figure~\ref{EXEC5} summarizes the number of mistakes made by the three algorithms for all the ten labels on the test data. Actually it shows that MCP practically performs better than the other two algorithms under the restrictions of single dataset pass and hypothesis size that is linear in the number of the relevant features.

\begin{figure}
\begin{center}
\boxer{ \setlength{\unitlength}{1cm}
\centering \includegraphics[scale=0.55]{.//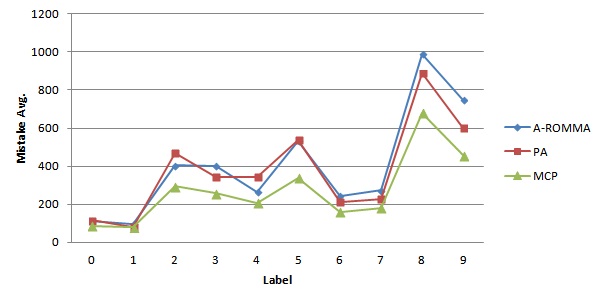}
}
\end{center}
\caption{Single label classifier mistake rates of MCP, PA, A-ROMMA on MNIST dataset. }
\label{EXEC5}
\end{figure}

\appendix
\section{ Proofs of Sect. \ref{SEC02} }
\label{PRFS}
\begin{proof}
(Lemma \ref{lemma3}) .
\begin{align}
\alpha_{i+1} &= \frac{\mathbf{w}^T\mathbf{w}_{i+1}}{\|\mathbf{w}_{i+1}\|_2} = \frac{\|\mathbf{w}_i\|_2\alpha_i+\|\mathbf{w}_i\|_2x_i|\mathbf{w}^T\mathbf{a}_i|}{\sqrt{\|\mathbf{w}_i\|^2_2+ \lambda_i^2\|\mathbf{a}_i\|_2^2+2\lambda_iy_i(\mathbf{w}_i^T\mathbf{a}_i)}} \geq  \nonumber \\
&\geq\frac{\alpha_i +\gamma x_i}{\sqrt{1+x_i^2\|\mathbf{a}_i\|_2^2+2x_i\frac{y_i(\mathbf{w}_i^T\mathbf{a}_i)}{\|\mathbf{w}_i\|_2}}}\enspace \nonumber .
\end{align}
This implies the result.
\qed\end{proof}

\begin{proof}
(Lemma \ref{lemma4}). From solving ${\partial{\Phi(x)}}/{\partial{x}}=0$ and checking the sign of  $\frac{\partial{\Phi(x)}}{\partial{x}}$ we get the result.
\qed\end{proof}

\begin{proof}
( Lemma \ref{lemma35}). We write $\Phi(x)$ in the following manner
\begin{equation}
\Phi(x)=\frac{(r-pt)+p(x+t)}{\sqrt{(s-qt^2)+q(x+t)^2}}\equiv \frac{r'+p'x'}{\sqrt{s'+q'{x'}^{2}}}\enspace,
\end{equation}
where $r^{'} = r-pt$, $p^{'}=p$, $x^{'} = x+t$, $s^{'}=s-qt^2$ and $q^{'}=q$. Since $s+q\cdot x^2+2tq\cdot x> 0$ for all $x$, we have $q>0$ and $\Delta = 4t^2q^2-4sq<0$ which implies $q'>0$ and $s'>0$. Now by Lemma \ref{lemma4}, if $r^{'}=r-pt\neq 0$, the optimal value of $\Phi(x)$ is in 
\begin{equation}
\label{x0}
x_0={x'}_0-t=\frac{p's'}{r'q'}-t=\frac{p(s-qt^2)}{q(r-pt)}-t=\frac{ps-rtp}{q(r-pt)},
\end{equation}
and is equal to 
$\Phi(x_{0})=sign(r')\sqrt{\frac{{r'}^2}{s'}+\frac{{p'}^2}{q'}} = sign(r-pt)\sqrt{\frac{{(r-pt)}^2}{s-qt^2}+\frac{p^2}{q}}.$  
This point is minimal if $r-pt<0$ and maximal if $r-pt>0$. If $r'=r-pt=0$ then the function $\Phi(x)$ is monotone increasing if $p'=p>0$ and monotone decreasing if $p'=p<0$. Now we have six cases:

\begin{description}

\item[Case 1:] If $r'=r-pt=0$ and $p>0$ then the function is monotone increasing, and therefore, $\Phi^{*}=\Phi(\infty)=p/\sqrt{q}$.

\item[Case 2:] If $r'=r-pt=0$ and $p<0$ then the function is monotone decreasing, and therefore, $\Phi^{*} = \Phi(0) = r/\sqrt{s}$.

\item[Case 3:] If $r'=r-pt>0$ and $ps-rtq\geq 0$ then by~(\ref{x0}) we get $x_0= (ps-rtq)/q(r-pt)> 0$ and is a maximal point. Therefore, $\Phi^{*} = \Phi(x_0)$.

\item[Case 4:] If $r'=r-pt>0$ and $ps-rtq < 0$ then by~(\ref{x0}) we get $x_0<0$ and is a maximal point. Therefore, the function is monotone decreasing for $x>0$; hence, $\Phi^{*} = \Phi(x_0)$.

\item[Case 5:] If $r'=r-pt<0$ and $ps-rtq\geq 0$ then by~(\ref{x0}) we get $x_0\leq 0$ and is a minimal point. Therefore, $\Phi(x)$ is monotone increasing for $x>0$ and $\Phi^{*} = \Phi(\infty)=p/\sqrt{q}$.

\item[Case 6:]If $r'=r-pt<0$ and $ps-rtq < 0$ then by (\ref{x0}) we get $x_0> 0$ and is a minimal point. Therefore, $\Phi^{*} = max(\Phi(\infty),\Phi(0))=max\left ( \frac{p}{\sqrt{q}},\frac{r}{\sqrt{s}}\right )$. 
\end{description}

\qed\end{proof}

\end{document}